\documentclass[english,a4paper,12pt]{article}

\hyphenation{analysis onemax}

\usepackage{amsxtra, amsfonts, amssymb, amstext, amsmath, mathtools}
\usepackage{amsthm}
\usepackage{booktabs}
\usepackage{nicefrac}
\usepackage{xspace}
\usepackage{url}\urlstyle{rm}
\usepackage{graphics,color}
\usepackage[algo2e,ruled,vlined,linesnumbered]{algorithm2e}
\usepackage{wrapfig}

\clubpenalty=10000
\widowpenalty=10000
\newtheorem{theorem}{Theorem}

\newtheorem{proposition}[theorem]{Proposition}

\newcommand{\oea}{\mbox{$(1 + 1)$~EA}\xspace}

\newcommand{\oplea}{\mbox{$(1+\lambda)$~EA}\xspace}

\newcommand{\oclea}{\mbox{$(1,\lambda)$~EA}\xspace}
\newcommand{\opllga}{\mbox{$(1+(\lambda,\lambda))$~GA}\xspace}

\newcommand{\OM}{\textsc{OM}\xspace}
\newcommand{\onemax}{\textsc{OneMax}\xspace}
\newcommand{\LO}{\textsc{Leading\-Ones}\xspace}
\newcommand{\leadingones}{\LO}
\newcommand{\needle}{\textsc{Needle}\xspace}

\newcommand{\plateau}{\textsc{Plateau}\xspace}
\newcommand{\jump}{\textsc{Jump}\xspace}

\DeclareMathOperator{\poly}{poly}

\newcommand{\R}{\ensuremath{\mathbb{R}}}

\newcommand{\N}{\ensuremath{\mathbb{N}}} 
\newcommand{\Z}{\ensuremath{\mathbb{Z}}}

\newcommand{\calE}{\ensuremath{\mathcal{E}}} 
 
\newcommand{\calP}{\ensuremath{\mathcal{P}}}


\DeclareMathOperator{\Geom}{Geom}

\newcommand{\eps}{\varepsilon}

\begin{document}
\title{Exponential Upper Bounds for the Runtime of Randomized Search Heuristics\thanks{Extended version of a paper appearing in the proceedings of PPSN 2020~\cite{Doerr20ppsnUB}. This version contains all proofs and other details that had to be omitted in the conference version for reasons of space (roughly an additional 50\% text over the conference version) as well as the new Section~\ref{sec:simpleGA} on the simple genetic algorithm.}}

\author{Benjamin Doerr\\ Laboratoire d'Informatique (LIX)\\ CNRS, \'Ecole Polytechnique\\ Institut Polytechnique de Paris\\ Palaiseau\\ France
}

\maketitle

{\sloppy

\begin{abstract}
  We argue that proven exponential upper bounds on runtimes, an established area in classic algorithms, are interesting also in heuristic search and we prove several such results. We show that any of the algorithms randomized local search, Metropolis algorithm, simulated annealing, and (1+1) evolutionary algorithm can optimize any pseudo-Boolean weakly monotonic function under a large set of noise assumptions in a runtime that is at most exponential in the problem dimension~$n$. This drastically extends a previous such result, limited to the (1+1) EA, the LeadingOnes function, and one-bit or bit-wise prior noise with noise probability at most $1/2$, and at the same time simplifies its proof. With the same general argument, among others, we also derive a sub-exponential upper bound for the runtime of the $(1,\lambda)$ evolutionary algorithm on the OneMax problem when the offspring population size $\lambda$ is logarithmic, but below the efficiency threshold. To show that our approach can also deal with non-trivial parent population sizes, we prove an exponential upper bound for the runtime of the mutation-based version of the simple genetic algorithm on the OneMax benchmark, matching a known exponential lower bound.
\end{abstract}

\section{Introduction}

The mathematical analysis of runtimes of randomized search heuristics is an established field of the general area of heuristic search~\cite{NeumannW10,AugerD11,Jansen13,DoerrN20}. The vast majority of the results in this area show that a certain algorithm can solve (or approximately solve) a certain problem within some polynomial runtime (polynomial upper bound on the runtime) or show that this is not possible by giving a super-polynomial, often exponential, lower bound on the runtime. 

As a rare exception to this rule, in his extensive analysis of how the $(1+1)$ evolutionary algorithm (\oea)\footnote{See Section~\ref{sec:prelim} for details on all technical terms used in this introduction.} optimizes the \leadingones benchmark in the presence of prior noise, Sudholt~\cite[Theorem~6]{Sudholt20} showed that for one-bit or bit-wise noise with noise probability at most $\frac 12$, the \oea finds the optimum of \leadingones in time at most $2^{O(n)}$. While clearly a very natural result -- everyone would agree that also with such noise the unimodal \leadingones problem should not become harder than the needle-in-the-haystack problem -- the technical, long, and  problem-specific proof of this result, despite following the intuitive argument just laid out, suggests that such analyses can be harder than one would expect.

In this work, we will argue that such exponential upper bounds are interesting beyond completing a runtime picture of a given problem. We then show that with a different analysis method such uncommon runtime questions can be analyzed relatively easily. As one out of several results, we drastically extend the result in~\cite{Sudholt20} and show that an exponential runtime guarantee holds for 
\begin{itemize}
\item any of the algorithms randomized local search, Metropolis algorithm, simulated annealing, and \oea,
\item when optimizing any weakly monotonic objective function, e.g., \onemax, linear functions, monotone polynomials, \leadingones, plateau functions, and the needle problem,
\item in the presence of all common forms of prior and posterior noise with a noise probability of at most $1-\eps$, $\eps>0$ a constant.
\end{itemize}

\subsection{Exponential Runtime Analysis}

The area of mathematical runtime analysis, established as a recognized subfield of the theory of evolutionary algorithms by Ingo Wegener and his research group, seeks to understand the working principles of evolutionary computation via rigorously proven results on the performance of evolutionary algorithms and other search heuristics in a similar spirit as done in classic algorithms analysis for much longer time. 

Adopting the view of classic algorithmics that runtimes polynomial in the problem size are efficient and larger runtimes are inefficient, the vast majority of the results in this field prove polynomial upper bounds or super-polynomial lower bounds. For two reasons, we feel that also super-polynomial and even exponential runtime guarantees are desirable in the theory of evolutionary algorithms. 

Our first set of arguments is identical to the arguments made in the classic algorithms field, which led to a shift in paradigms and established the field of exact exponential algorithms~\cite{Fomink10,FominK13}. These arguments are that (i)~for many important problems nothing better than exponential time algorithms are known, so one cannot just ignore these problems in algorithms research, (ii)~with the increase of computational power, also exponential time algorithms can be used for problems of moderate (and interesting) size, and (iii)~that the existing research on exponential-time algorithms has produced many algorithms that, while still exponential time, are much faster than na\"ive exponential-time approaches like exhaustive search. 

Our second line of argument is that exponential time algorithms are of additional interest in evolutionary computation for the following reasons. 

(i)~\emph{To increase our understanding of the working principles of evolutionary algorithms.} For a large number of algorithmic problems in our field an exponential lower bound has been proven, but for essentially none of these problems an upper bound better than the trivial $n^{O(n)}$ bound exists. It is clear that matching upper and lower bounds tell us most, not only about the runtimes, but also about the working principles of EAs. Tight bounds naturally have to grasp the true way the EA progresses better. For example, the general $n^{O(n)}$ upper bound for all algorithms using standard bit mutation is based on the simple argument that the optimum can be generated from any search point with probability at least $n^{-n}$. Besides being very pessimistic, this argument does not tell us a lot on how really the EA optimizes the problem at hand (except for the very particular case that the EA is stuck in a local optimum in Hamming distance $n$ to the global optimum). In contrast, as a positive example, the matching $(1 \pm o(1)) e n \ln n$ upper~\cite{Muhlenbein92} and lower~\cite{GarnierKS99,DoerrFW11} bound for the runtime of the \oea on \onemax together with their proofs shows that for this optimization process, the effect of mutations flipping more than one bit has no influence on the runtime apart from lower order terms. In a broader sense, this insight suggests that flipping larger number of bits is mainly useful to leave local optima, but not to make fast progress along easy slopes of the fitness landscape. 

(ii)~\emph{Because understanding runtimes in the exponential and super-exponential regime is important for the application of EAs.} Many classic evolutionary algorithms can easily have a super-exponential runtime. For example, Witt~\cite{Witt05} has shown that the simple \oea has an expected runtime\footnote{As common both in classic algorithms and in our field, by runtime we mean the worst-case runtime taken over all input instances.} of $n^{\Theta(n)}$ on the minimum makespan scheduling problem. Hence knowing that an evolutionary algorithm ``only'' has an exponential runtime can be interesting. 

We note that for problems with exponential-size search spaces (such as the search space $\{0,1\}^n$ regarded exclusively in this work) blind random search and exhaustive search are exponential-time alternatives. For that reason, in addition to knowing that an EA has an exponential runtime guarantee (that is, a runtime of at most $C^{n}$ for some constant $C>1$), it would be very desirable to also have a good estimate for the base of the exponential function, that is, the constant $C$. Unfortunately, at this moment where we just start reducing the trivial $n^{O(n)}$ upper bound to exponential upper bounds, we are not yet in the position to optimize the constants in the exponent. We are optimistic though (and give some indication for this in Section~\ref{sec:constants}) that our methods can be fine-tuned to give interesting values for the base of the exponential function as well. We recall that such an incremental progress is not untypical for the mathematical runtime analysis of EAs -- in the regime of polynomial bounds, subject to intensive research since the 1990s, the leading constants for elementary problems such as \leadingones and linear functions were only determined from 2010 on~\cite{BottcherDN10,Sudholt13,Witt13}.

With this motivation in mind and spurred by the observation that exponential upper bounds are not trivial to obtain, we conduct in this work the first investigation focused on the problem of proving exponential upper bounds for runtimes of EAs. 

\subsection{State of the Art}

We are not aware of many previous works on exponential or super-exponential upper bounds on runtimes of EAs. In the, to the best of our knowledge, first work proving an exponential upper bound for the runtime of an~EA, Droste, Jansen, and Wegener~\cite[Theorem~9]{DrosteJW98ppsn} show that the \oea optimizes the \needle function (called peak function there) in expected time at most $(2\pi)^{-1} n^{1/2} \exp(2n)$. Only a year later, Garnier, Kallel, and Schoenauer~\cite[Proposition~3.1]{GarnierKS99} in a remarkably precise analysis showed that the expected runtime of the \oea on the \needle function is $(1 \pm o(1)) (1 - \frac 1e)^{-1} 2^n$.

A general upper bound of $n^n$ for the expected runtime of the \oea on any pseudo-Boolean function was given in~\cite[Theorem~6]{DrosteJW02} by arguing that each offspring with probability at least $n^{-n}$ is the optimum. This bound is tight as witnessed, among others, by the trap function~\cite[Theorem~8]{DrosteJW02} and the minimum makespan scheduling problem~\cite{Witt05}. Similar arguments as used in~\cite{DrosteJW02} to prove the $n^n$ upper bound showed an upper bound of $4^n \log_2 n$ for the \oea using the mutation rates $2^i / n$, $i = 0, 1, \dots, 2^{\lfloor \log_2 n \rfloor - 1}$, in a cyclic fashion~\cite[Theorem~3]{JansenW06} and an upper bound of $O(n^\beta 2^n)$ for the fast \oea with (constant) power-law exponent $\beta > 1$~\cite[Theorem~5.3]{DoerrLMN17}. With more complex arguments, the self-adjusting SD-(1+1)~EA was shown to optimize the trap function in time $O((2.34)^n \log n)$~\cite[Corollary~2]{RajabiW20}. It is easy to see that when compromising with the base of the exponential function, very similar arguments show an exponential upper bound for the runtime on any function $f : \{0,1\}^n \to \R$.

There are a few analyses for parameterized problems showing bounds that can become exponential or worse when the problem parameter is chosen in an extreme manner. Here the $\Theta(n^k)$ runtime bound for the \oea optimizing jump functions with jump size $k \ge 2$~\cite[Theorem~25]{DrosteJW02} is the best known example. More interesting results have been shown in the context of parameterized complexity~\cite{NeumannS20bookchapter}, but again these results have been derived with small parameter values in mind and thus are most interesting for this case.

In contrast to these sporadic upper bounds, there is a large number of (near-)exponential lower bounds, e.g., for a broad class of non-elitist algorithms with too low selection pressure~\cite{Lehre10}, for some algorithms using fitness-proportionate selection~\cite{HappJKN08,NeumannOW09,Lehre11,Doerr20ppsnLB}, for the simple genetic algorithm with an only moderately large population size~\cite{OlivetoW15}, for the optimization of strictly monotonic functions~\cite{DoerrJSWZ13,Lengler18,LenglerZ19}, and for various problems in noisy optimization~\cite{GiessenK16,QianBJT19,Sudholt20}.

Apart from a single exception, for none of these lower bounds it is known how tight they are, not even a result ruling out a runtime of $n^{\Theta(n)}$. The exceptional exponential upper bound shown in~\cite[Theorem~6]{Sudholt20} reads as follows. Consider optimizing the \leadingones benchmark function defined on bit strings of length $n$ via the \oea. Assume that in each iteration, the fitness evaluation of both parent and offspring is subject to stochastically independent prior noise of one of the following two types. (i)~With probability $p \le \frac 12$, not the true fitness is returned, but the fitness of a random Hamming neighbor. (ii)~With probability $p' \in [0,1]$, the search point to be evaluated is disturbed by flipping each bit independently with some probability $q \le \frac 12$ and the fitness of this disturbed search point is returned, with probability $1 - p'$, the fitness of the original search point is returned; here we assume that $p' \min\{1, qn\} \le \frac 12$. Then the expected optimization time, that is, the number of iterations until the optimum is sampled, is at most exponential in $n$.

With a noise probability of at most $\frac 12$ and a weakly monotonic, that is, weakly preferring $1$-bits over $0$-bits, fitness function one would think that this optimization process in some suitable sense is at least as good as the corresponding process on the \needle function, where absolutely no fitness signal guides the search. This is indeed true, as the proof in~\cite{Sudholt20} shows. Surprisingly, as this proof also shows, it is highly non-trivial to make this intuitive argument mathematically rigorous. The proof in~\cite{Sudholt20} is around four pages long (including the one of the preliminary lemma) and builds on a technical estimate of the mixing time, which heavily exploits characteristics of the \leadingones objective function. Consequently, this proof does not easily generalize to other easy benchmark functions such as \onemax or linear functions. 

\subsection{Our Results}

Observing that the natural approach taken in~\cite{Sudholt20} is unexpectedly difficult, we develop an alternative approach to proving exponential upper bounds. It builds on the following elementary observation. In the, slightly extremal, situation that we aim at an exponential upper bound, we can wait for an exponentially unlikely ``lucky'' way to generate the optimum. Being at most exponentially unlikely, that is, having a probability of $p = 2^{-O(n)}$, it takes $2^{O(n)}$ attempts until we succeed. Hence if each attempt takes at most exponential time~$T_0$ (all our attempts will only take polynomial time), we obtain an exponential upper bound on the expected runtime, and moreover, the distributional bound that the runtime is stochastically dominated by $T_0$ times a geometric distribution with success rate $p$. This general argument (without the elementary rephrasing in the stochastic domination language) was already used in the proof of the $\poly(n) e^{2n}$ upper bound on the expected runtime of the \oea on the \needle function by Droste, Jansen, and Wegener~\cite{DrosteJW98ppsn} more than twenty years ago. It is apparently not very well known in the community, most likely due to the fact that only one year later, Garnier, Kallel, and Schoenauer~\cite{GarnierKS99} presented a much tighter analysis of this runtime via different methods. We are not aware of any other use of this argument, which might explain why it was overlooked in~\cite{Sudholt20} (and we give in that we also learned it only from a review on an earlier version of this work). 

How powerful this simple approach is, naturally, depends on how easy it is to exhibit lucky ways to find the optimum fast. As we shall demonstrate, this is in fact often easy. For example (see Theorem~\ref{thm:main} for the details), it suffices that in each iteration the probability to move to a Hamming neighbor one step closer to the optimum is $\Omega(n^{-1})$. From this, we can show that from any starting point, the probability to reach the optimum in at most $n$ iterations is at least $2^{-O(n)}$. As argued in the preceding paragraph, this yields an expected runtime of $n 2^{O(n)} = 2^{O(n)}$. This argument, without noise and for the \oea only, was also used in the \needle analysis in~\cite{DrosteJW98ppsn}.

Together with some elementary computations, this approach suffices to show that a large number of ${(1+1)}$-type algorithms in the presence of a large variety of types of noise with noise probability at most $1 - \eps$, $\eps > 0$ a constant, optimize any weakly monotonic function (including, e.g., \onemax, \leadingones, and the needle function) in at most exponential time (Theorem~\ref{thm:mono}). 

With similar arguments, we extend this result to the \oea optimizing jump functions with jump size at most $\frac{n}{\ln n}$ in Section~\ref{ssec:jump}. For the particular noise model of bit-wise noise with rate $q$ (search points are disturbed by flipping each bit independently with probability $q$ before the fitness evaluation), we show in Section~\ref{ssec:extreme} that any of the above algorithms even in the presence of extreme noise with $q$ as high as $1-\eps$, $\eps > 0$ a constant, can optimize the \onemax benchmark in exponential time. For the exponential lower bound on the runtime of the \oea with fitness-proportionate selection on linear functions in~\cite{HappJKN08}, we easily derive a matching upper bound in Section~\ref{ssec:fp}. As an example showing that our approach can also yield sub-exponential upper bounds, we prove in Section~\ref{ssec:subexp} that the \oclea with $\lambda \ge (1-\eps) \log_{\frac{e}{e-1}}(n)$, and thus potentially below the threshold for polynomial time, optimizes \onemax in time~$\exp(O(n^{\eps}))$. To ease the presentation, we only regard single-trajectory algorithms in our general result (Theorem~\ref{thm:main}). To show that similar arguments can also be used to analyze algorithms with non-trivial parent populations, we show in Section~\ref{sec:simpleGA} that the mutation-based variant of the simple genetic algorithm finds the optimum of \onemax in exponential time. This matches the known exponential lower bound.

\section{Preliminaries}\label{sec:prelim}

In this section, we describe the algorithms, the noise models, and the benchmark problems considered in this work. Almost all of this is standard, so we aim at brevity and refer to other works for more detail. We only consider optimization problems defined on the search space $\{0,1\}^n$ of bit strings of length $n$; we thus also formulate all algorithms only for this setting. We have no doubt, though, that our methods can also be applied to other discrete optimization problems.

We only use the standard notation of this field. This includes writing $[a..b] := \{z \in \Z \mid a \le z \le b\}$ and denoting by $H(x,y) := |\{i \in [1..n] \mid x_i = y_i\}|$ the \emph{Hamming distance} of two bit strings $x, y \in \{0,1\}^n$. We denote by $\Geom(p)$ the \emph{geometric distribution} with success rate $p \in (0,1]$. Hence if a random variable $X$ is geometrically distributed with parameter $p$, we write $X \sim \Geom(p)$ to denote this, then $\Pr[X = k] = (1-p)^{k-1} p$ for all $k \in \Z_{\ge 1}$. For two random variables $X, Y$ we write $X \preceq Y$ to denote that $Y$ \emph{stochastically dominates} $X$, that is, that $\Pr[X \ge \lambda] \le \Pr[Y \ge \lambda]$ for all $\lambda \in \R$.

\subsection{Algorithms}\label{ssec:algo}

We call a randomized search heuristic \emph{single-trajectory} search algorithm if it is an iterative heuristic which starts with a single solution $x^{(0)}$ and in each iteration $t = 1, 2, \dots$ updates this solution to a solution $x^{(t)}$. We do not make any assumption on how this update is computed. In particular, the next solution may be computed from more than one solution candidate sampled in this iteration. We do, in principle, allow that information other than the search point $x^{(t-1)}$ is taken into iteration $t$. However, in our main technical result we require that the key condition can be checked only from the search point $x^{(t-1)}$. Formally speaking, this means that for any possible history of the search process up to this point, when conditioning on this history, the key condition is true. To ease the language, we shall write ``regardless of what happened in the first $t-1$ iterations'' to express this conditioning. Naturally, for algorithms that can be described via a Markov chain a conditioning on the history is not necessary.

Examples for single-trajectory algorithms are (randomized) local search, the Metropolis algorithm, simulated annealing, and evolutionary algorithms working with a parent population of size one, such as the \oea, \oplea, \oclea, \opllga~\cite{DoerrDE15}, and SSWM algorithm~\cite{PaixaoHST17}. 

We call a single-trajectory algorithm \emph{${(1+1)}$-type algorithm} if in each iteration $t$ it generates one solution $y$ and takes as next parent individual $x^{(t)}$ either $y$ or $x^{(t-1)}$. Among the above examples, (randomized) local search, the Metropolis algorithm, simulated annealing, and the \oea are ${(1+1)}$-type algorithms.

We brief{}ly describe the ${(1+1)}$-type algorithms regarded in this work. Usually all these algorithms start with a solution $x^{(0)}$ chosen uniformly at random from $\{0,1\}^n$. Since none of our results relies on this assumption, we allow any kind of initialization, that is, also without explicit mention all our results hold for any initial search point $x^{(0)}$. We formulate all algorithms for the maximization of a given objective function $f : \{0,1\}^n \to \R$, which as common in evolutionary computation we call \emph{fitness}. 

The \textbf{Randomized local search (RLS)} heuristic in each iteration $t$ generates a new solution $y$ by copying $x^{(t-1)}$ and then flipping a single bit, chosen uniformly at random, in~$y$. If $f(y) \ge f(x^{(t-1)})$, then $x^{(t)} := y$, else $x^{(t)} := x^{(t-1)}$.

The \textbf{Metropolis algorithm} is identical to RLS apart from the selection step. If $f(y) \ge f(x^{(t-1)})$, then again $x^{(t)} := y$. However, if $f(y) < f(x^{(t-1)})$, then still with probability 
\[\exp\left(-\frac{f(x^{(t-1)}) - f(y)}{T}\right)\] 
the algorithm accepts the new solution, that is, $x^{(t)} := y$. Only with probability $1 - \exp(-(f(x^{(t-1)}) - f(y))/T)$ the new solution is discarded, that is, $x^{(t)} := x^{(t-1)}$. The \emph{temperature} $T > 0$ is an algorithm parameter that defines the selection pressure. The Metropolis algorithm with a time-dependent (usually decreasing) temperature $T(t)$ is called \textbf{simulated annealing}.

The \textbf{(1+1) EA} is identical to randomized local search except that now the offspring $y$ is generated by a global mutation operator. In this work, we only consider the most classic choice of mutation, which is standard bit mutation. Here, $y$ is taken as a copy of $x^{(t-1)}$ and then each bit of $y$ is flipped independently with probability $p$, the \emph{mutation rate}. The most common choice, and our choice in this work for all static mutation rates, is $p = \frac 1n$. When the mutation rate is chosen randomly according to a power-law with exponent $\beta > 1$, this algorithm is called \textbf{fast (1+1) EA}~\cite{DoerrLMN17}. More precisely, here a random $\alpha \in [1..n/2]$ is chosen such that $\Pr[\alpha = k] = k^{-\beta} / \sum_{i=1}^{n/2} i^{-\beta}$ and then standard bit mutation is performed with $p = \frac \alpha n$.

From the description of the algorithms, the following property is immediate. In simple words, it says that the algorithms go to any Hamming neighbor that is not worse than the parent with probability $\Omega(\frac 1n)$.

\begin{proposition}\label{prop:algo}
  For any algorithm $A$ described above (and any choice of the parameters not fixed above, that is, temperature $T$, time-dependent temperature $T(t)$, or power-law exponent $\beta > 1$), there is a constant $c_A > 0$ such that the following holds. 
  
  For any iteration~$t$ and any $z$ with $H(z,x^{(t-1)})=1$, and regardless of what happened in the previous iterations, the offspring $y$ generated by $A$ in iteration~$t$ satisfies $\Pr[y = z] \ge \frac{c_A}{n}$. If $f(y) \ge f(x^{(t-1)})$, then also $\Pr[x^{(t)} = z] \ge \frac{c_A}{n}$.  
\end{proposition}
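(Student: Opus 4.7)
The plan is to verify the proposition algorithm by algorithm, since Proposition~\ref{prop:algo} is really just a uniform restatement of the elementary mutation/selection probabilities of the four heuristics. The two statements decouple cleanly: the first concerns only the mutation operator and the second adds an observation about the selection rule, so I would first prove a uniform lower bound on $\Pr[y=z]$ and then check that acceptance is forced when $f(y)\ge f(x^{(t-1)})$.

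For the first claim, fix any iteration~$t$ and any $z$ with $H(z,x^{(t-1)})=1$, and let $i$ be the unique bit index where $z$ and $x^{(t-1)}$ differ. For RLS, Metropolis, and simulated annealing the mutation step flips exactly one bit chosen uniformly at random from $[1..n]$, so $\Pr[y=z]=\frac1n$ and one can take $c_A=1$. For the \oea with static rate $\frac1n$, the event $y=z$ occurs iff bit~$i$ is flipped and the remaining $n-1$ bits are not, which has probability $\frac1n(1-\frac1n)^{n-1}\ge \frac{1}{en}$, so $c_A=1/e$ works. For the fast \oea one chooses $\alpha\in[1..n/2]$ according to a power-law with exponent $\beta>1$ and then uses mutation rate $p=\alpha/n$. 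The normalization constant $C_\beta:=\sum_{i=1}^{n/2}i^{-\beta}$ is bounded above by $\zeta(\beta)<\infty$, so $\Pr[\alpha=1]\ge 1/\zeta(\beta)$, and conditioning on $\alpha=1$ reduces the situation to the \oea analysis; hence $\Pr[y=z]\ge\frac{1}{e\zeta(\beta) n}$ and $c_A=1/(e\zeta(\beta))$ does the job. Crucially, all these bounds depend only on the current parent $x^{(t-1)}$ and fresh random coins, so they hold regardless of the prior history.

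For the second claim, I would simply note that each of the four algorithms accepts any offspring $y$ that is at least as fit as $x^{(t-1)}$ with probability~$1$: RLS, the Metropolis algorithm, simulated annealing, and the \oea are all defined to set $x^{(t)}:=y$ deterministically whenever $f(y)\ge f(x^{(t-1)})$; the exponential acceptance rule of Metropolis/simulated annealing only comes into play when $f(y)<f(x^{(t-1)})$. So on the event $\{y=z\}$ we have $f(y)=f(z)$, and under the hypothesis $f(y)\ge f(x^{(t-1)})$ the algorithm moves to $y=z$. Therefore $\Pr[x^{(t)}=z]\ge \Pr[y=z]\ge c_A/n$ with the same constant.

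I do not expect any real obstacle; the only step that requires even a moment of care is the fast \oea, where one must observe that the power-law normalizer $C_\beta$ is a constant depending only on~$\beta$ (not on~$n$) and that the $\alpha=1$ term alone already suffices, since the contributions of larger $\alpha$ can only help. One could, if desired, list these four constants explicitly and take $c_A$ to be their minimum to obtain a single universal constant across all the algorithms considered.
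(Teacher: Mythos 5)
Your proof is correct and is precisely the elementary case-by-case verification that the paper treats as immediate from the algorithm descriptions (the paper states Proposition~\ref{prop:algo} without proof); the per-algorithm constants you extract, including $c_A = 1$ for the single-bit-flip heuristics, $c_A = 1/e$ for the \oea, and the $1/\zeta(\beta)$ factor from the $\alpha=1$ term of the fast \oea, together with the observation that all four selection rules accept a weakly improving offspring with probability one, are exactly what is needed. No gaps.
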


\subsection{Noise Models}\label{ssec:noise}

Optimization in the presence of noise, that is, stochastically disturbed access to the problem instance, is an important topic in the optimization of real-world problems. The most common form are noisy objective functions, that is, that the optimization algorithm does not always learn the correct quality (fitness) of a search point. Randomized search heuristics are generally believed to be reasonably robust to noise, see, e.g.,~\cite{JinB05,BianchiDGG09}, which differs from problem-specific deterministic algorithms, which often cannot cope with any noise. Some theoretical work exists on how randomized search heuristics cope with noise, started by the seminal paper of Droste~\cite{Droste04} and, quite some time later, continued with, among others,~\cite{SudholtT12,DangL15,AkimotoMT15,FriedrichKKS15,GiessenK16,FriedrichKKS17,Sudholt20,QianYZ18,BianQT18,Dang-NhuDDIN18,QianBJT19,DoerrS19}. We refer to the later papers or the survey~\cite{NeumannPR20bookchapter} for a detailed discussion of the state of the art. 

In theoretical studies on how randomized search heuristics cope with noise, the usual assumption is that all fitness evaluations are subject to independently sampled noise. Also, it is usually assumed that whenever the fitness of a search point is used, say in a selection step, then it is evaluated anew. This was not done, e.g., in~\cite{SudholtT12}, but as~\cite{DoerrHK12ants} points out, a much better performance can be obtained when reevaluating solutions. 

Various models of noise have been investigated so far. In \emph{prior noise} models, the search point to be evaluated is subject to a stochastic modification and the algorithm learns the fitness of the disturbed search point (but not the disturbed search point itself). The following prior noise models from the literature will be covered in this work. In this description, we always assume that we try to learn the fitness $f$ of a search point $x$.

\textbf{One-bit noise with probability $p$:} With probability $p$, the fitness of a random Hamming neighbor of $x$ is returned, that is, $y$ is obtained from $x$ by flipping a bit chosen uniformly at random and then $f(y)$ is returned; otherwise, the correct fitness $f(x)$ is returned. An asymmetric version of this noise was considered in~\cite{QianYZ18}. We do not regard this here, but note that all our results hold for this noise as well.

\textbf{Independent bit-flip noise with rate $q$:} From $x$, a search point $y$ is obtained by flipping each bit independently with probability $q$; then $f(y)$ is returned. 

\textbf{$(p,q)$-noise:} With probability $p$, a search point $y$ is obtained from $x$ by flipping each bit independently with probability $q$ and $f(y)$ is returned; otherwise, $f(x)$ is returned. Note that the probability that the fitness of the original search point is returned is $p + (1-p)(1-q)^n$. The $(p,q)$-noise model contains as special case $(1,q)$ the independent bit-flip noise with rate~$q$.

Very roughly speaking, the typical results for these types of noise are that ${(1+1)}$-type algorithms remain efficient when noise occurs with a low probability, e.g., of $O(\frac{\log n}{n})$ for the \onemax benchmark or $O(\frac{\log n}{n^2})$  for the \leadingones benchmark, but that asymptotically higher noise levels lead to super-polynomial runtimes. By using non-trivial populations, resampling, or resorting to other algorithms such as estimation-of-distribution algorithms, the robustness to noise can be considerably improved.

In the \emph{posterior noise} model, the search point $x$ is first correctly evaluated, but then the obtained fitness $f(x)$ is disturbed. The most common posterior noise is \textbf{additive noise}, that is, the returned fitness is $f(x) + X$, where $X$ is a random variable sampled from some given distribution, which does not depend on $x$ (that is, for all search points the difference between the true and the noisy fitness is identically distributed). The most common setting is that $X$ follows a Gaussian distribution. We note that regardless of~$X$, this noise gives a correct comparison of two search points of different quality with probability at least $\frac 12$.
\begin{proposition}\label{prop:better}
  Let $x ,y \in \{0,1\}^n$ with $f(x) > f(y)$ (resp.~$f(x) \ge f(y)$). Let $X, Y$ be independent and identically distributed real-valued random variables. Then $\Pr[f(x) + X > f(y) + Y] \ge \frac 12$ (resp.~$\Pr[f(x) + X \ge f(y) + Y] \ge \frac 12$).
\end{proposition}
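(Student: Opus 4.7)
The plan is to reduce the two-variable statement to a symmetry property of the single random variable $Z := X - Y$. Since $X$ and $Y$ are i.i.d., the distribution of $Z$ coincides with that of $-Z$; this is the only fact about $X$ and $Y$ that will enter the argument.

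Write $\delta := f(x) - f(y)$, so the hypothesis is $\delta > 0$ (resp.\ $\delta \ge 0$). The event $\{f(x) + X > f(y) + Y\}$ is exactly $\{Z > -\delta\}$, and the analogous equality holds for the non-strict case. By the symmetry $Z \stackrel{d}{=} -Z$, I get $\Pr[Z > -\delta] = \Pr[-Z > -\delta] = \Pr[Z < \delta]$, and likewise $\Pr[Z \ge -\delta] = \Pr[Z \le \delta]$ in the non-strict version.

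Now I apply inclusion-exclusion to the two events $\{Z > -\delta\}$ and $\{Z < \delta\}$. Their union is all of $\R$ when $\delta > 0$ (since any real number is either greater than $-\delta$ or less than $\delta$, and in fact both when it lies in $(-\delta,\delta)$), so
\[
\Pr[Z > -\delta] + \Pr[Z < \delta] \;=\; 1 + \Pr[-\delta < Z < \delta] \;\ge\; 1.
\]
Combined with the equality from the previous step, this gives $2\Pr[Z > -\delta] \ge 1$, i.e.\ $\Pr[Z > -\delta] \ge \tfrac12$, which is the strict claim. For the non-strict version the analogous computation with $\{Z \ge -\delta\}$ and $\{Z \le \delta\}$ works because their union is $\R$ as soon as $\delta \ge 0$, yielding $\Pr[Z \ge -\delta] \ge \tfrac12$.

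There is no real obstacle here beyond bookkeeping: the only thing that needs care is that the strict/non-strict distinction is preserved on both sides (both in the translation to $Z$ and in the two events fed into inclusion-exclusion). Regularity assumptions on the distribution of $X$ and $Y$ are not needed, since the argument uses only the distributional identity $Z \stackrel{d}{=} -Z$ and the trivial set-theoretic fact that $\{Z > -\delta\}\cup\{Z<\delta\}=\R$.
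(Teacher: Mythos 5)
Your proof is correct and rests on the same two ingredients as the paper's: the symmetry of the pair $(X,Y)$ under swapping (equivalently, $Z=X-Y$ being distributed as $-Z$) and the observation that an event and its reflection cover the whole space, so each has probability at least $\tfrac 12$. The only cosmetic difference is that you apply the symmetry directly to the shifted event $\{Z>-\delta\}$, whereas the paper applies it to $\{X\ge Y\}$ and then uses that this event implies the desired one when $f(x)>f(y)$; both routes are equally valid.
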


\begin{proof}
  By symmetry, we have $\Pr[X \ge Y] = \Pr[Y \ge X]$. Hence 
  \begin{align*}
  1 &= \Pr[(X \ge Y) \vee (Y \ge X)] \\
  &\le \Pr[X \ge Y] + \Pr[Y \ge X] = 2 \Pr[X \ge Y].
  \end{align*}
  This shows $\Pr[X \ge Y] \ge \frac 12$ and thus $\Pr[f(x) + X > f(y) + Y] \ge \Pr[X \ge Y] \ge \frac 12$ (resp.~$\Pr[f(x) + X \ge f(y) + Y] \ge \Pr[X \ge Y] \ge \frac 12$). 
\end{proof} 

Since our aim is showing that also in the presence of extreme noise we still have at most exponential runtimes, we also consider the following \textbf{unrestricted adversarial noise with probability $p$}. In this model, with probability $1-p$ the true fitness is returned. With probability $p$, however, an all-powerful adversary decides the returned fitness value. This adversary knows the algorithm, the optimization problem, and the full history of the optimization process. He does not know, though, the outcome of future random events (both concerning the algorithm and the noise). 

The following basic observation gives an estimate for the probability that a noisy fitness comparison gives the right result. A corresponding result for additive posterior noise, namely that regardless of the noise distribution the fitnesses are compared correctly with probability at least $\frac 12$, was already shown in Proposition~\ref{prop:better}.

\begin{proposition}\label{prop:nonoise}
  Let $\eps > 0$. Let $f : \{0,1\}^n \to \R$. Let $x,y \in \{0,1\}^n$ such that $f(x) \le f(y)$. Consider any noise model described above except the one of additive posterior noise. Assume that $p \le 1-\eps$ in the case of one-bit noise or unrestricted adversarial noise, $(1 - q)^n \ge \eps$ in the case of bit-wise noise, $1 - p(1 - (1 - q)^n) \ge \eps$ in the case of $(p,q)$-noise. Denote by $\tilde f$ the noisy version of $f$ with our convention that each noise evaluation of $f$ uses fresh independent randomness. Then $\Pr[\tilde f(x) \le \tilde f(y)] \ge \eps^2$.
\end{proposition}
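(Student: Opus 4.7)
The plan is to identify, for each of the four noise models listed in the hypothesis, a single ``no-noise event'' whose probability is at least $\varepsilon$ and on which the noisy evaluation $\tilde f(z)$ coincides with the true value $f(z)$. Once we have such an event for both $x$ and $y$, independence of the two evaluations gives the desired $\varepsilon^2$ bound immediately: if both evaluations happen to be noise-free, then $\tilde f(x) = f(x) \le f(y) = \tilde f(y)$, so
\[
\Pr[\tilde f(x) \le \tilde f(y)] \;\ge\; \Pr[\text{both evaluations noise-free}] \;\ge\; \varepsilon \cdot \varepsilon = \varepsilon^2.
\]

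The first step is therefore to go through the four models and read off the no-noise probability. For one-bit noise with probability $p \le 1-\varepsilon$, the algorithm returns $f(x)$ with probability $1-p \ge \varepsilon$. For unrestricted adversarial noise with probability $p \le 1-\varepsilon$, likewise the true fitness is returned with probability $1-p \ge \varepsilon$ (the event that the adversary is not invoked). For independent bit-flip noise with rate $q$ satisfying $(1-q)^n \ge \varepsilon$, the event that no bit flips occurs with probability $(1-q)^n \ge \varepsilon$, and on this event the disturbed search point equals $x$ and hence $\tilde f(x) = f(x)$. For $(p,q)$-noise, the evaluation returns $f(x)$ exactly when either the outer Bernoulli trial selects the undisturbed branch (probability $1-p$) or it selects the disturbed branch but no bit flips (probability $p(1-q)^n$); summing, the total no-noise probability is $1 - p + p(1-q)^n = 1 - p(1-(1-q)^n) \ge \varepsilon$ by hypothesis.

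The second step is to combine across the two search points. Since the statement explicitly assumes that each fitness evaluation uses fresh independent randomness, the no-noise events for $x$ and for $y$ are independent, so their joint probability is at least $\varepsilon^2$. Conditioned on both occurring, $\tilde f(x) = f(x)$ and $\tilde f(y) = f(y)$, and the hypothesis $f(x) \le f(y)$ finishes the argument.

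I do not expect any genuine obstacle here; the only thing to be careful about is bookkeeping the four cases uniformly so that the proof reads as one argument applied to four instantiations of a single ``no-noise probability'' parameter. The additive posterior noise case is correctly excluded in the statement, because there even a high-probability no-noise event is not available (the noise is added every time); that case is instead handled by the symmetry argument of Proposition~\ref{prop:better}.
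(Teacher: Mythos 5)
Your proposal is correct and is essentially the paper's own proof, just spelled out in more detail: in each of the four models you isolate a no-noise event of probability at least $\eps$, and independence of the two evaluations gives the $\eps^2$ bound. No gaps.
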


\begin{proof}
  Under the conditions named above, with probability at least $\eps$ the noisy fitness returns the true fitness value. Consequently, with probability at least $\eps^2$ this happens for both $x$ and $y$ and we have thus $\tilde f(x) \le \tilde f(y)$.
\end{proof}

\subsection{Benchmark Problems}\label{ssec:problem}

We now describe the benchmark problems used in this paper. They are all well-known and extensively used, so we refer to the literature~\cite{NeumannW10,AugerD11,Jansen13,DoerrN20} for more details. As said earlier, we only regard problems defined on bit-strings of length $n$, hence all functions are $\{0,1\}^n \to \R$. In many respects the easiest benchmark problem is the function \textbf{OneMax} defined by $\OM(x) := \onemax(x) := \|x\|_1 = \sum_{i=1}^n x_i$ for all $x = (x_1, \dots, x_n) \in \{0,1\}^n$. This function is a member of the class of \textbf{linear functions}, which are all functions $f$ such that $f(x) = \sum_{i=1}^n a_i x_i$ for given $a_1, \dots, a_n \in \R$. One often assumes that all $a_i$ are positive, but we do not need this assumption. A function $f$ is called strictly \textbf{monotonic} (or strictly monotonically increasing), where the word strictly often is omitted, when $f(x) < f(y)$ for all $x, y \in \{0,1\}^n$ with $x \le y$ (component-wise) and $x \neq y$. This is equivalent to saying that whenever one flips a zero of the argument into a one, the fitness strictly increases. Clearly, linear functions with positive weights $a_i$ are strictly monotonic. \onemax, arbitrary linear functions, and monotonic functions can all be solved easily via randomized local search (RLS), namely in time $O(n \log n)$, as follows from a classic coupon collector argument. The \oea with mutation rate $\Theta(\frac 1n)$ solves all linear functions in time $O(n \log n)$~\cite{DrosteJW02,DoerrG13algo,Witt13}, but this is less obvious; for the \oea with fast mutation, an $O(n \log n)$ runtime bound is only known for \onemax~\cite{DoerrLMN17}. The situation for strictly monotonic functions is complicated~\cite{Jansen07,DoerrJSWZ13,ColinDF14,Lengler18}, but the latest result~\cite{LenglerMS19} shows at least that the \oea with standard mutation rate $\frac 1n$ solves all strictly monotonic functions in time $O(n \log^2 n)$. For the Metropolis algorithm, a temperature of $T \le \frac{1}{2\ln n}$ is easily seen to suffice to optimize $\onemax$ in time $O(n \log n)$, whereas a temperature of $T \ge \frac{2}{\ln n}$ leads to a negative drift in a large range around the optimum, and thus to a super-polynomial runtime. The precise boundary between polynomial and super-polynomial, unfortunately hard to interpret, was determined in~\cite{KadenWW09}. We are not aware of such a result for simulated annealing, but it is clear that the cooling schedule needs to lead to temperatures for which the Metropolis algorithm is polynomial for a sufficiently long time to ensure a polynomial runtime.  

When noise comes into play, RLS and \oea can stand low noise levels when optimizing \onemax. For the one-bit noise model, any $p = O(\frac{\log n}{n})$ implies a polynomial runtime, higher values of $p$ give super-polynomial runtimes~\cite{Droste04}. Similar results hold for bit-wise noise and $(p,q)$ noise~\cite{GiessenK16,QianBJT19,Dang-NhuDDIN18}. For additive posterior noise, the \oea has a polynomial runtime on \onemax if the variance of the noise distribution is $\sigma^2 = O(\frac{\log n}{n})$. If the noise has a Gaussian distribution, then the runtime is polynomial if the variance  is at most $\sigma^2 \le \frac{1}{4 \ln n}$, and it is super-polynomial, if $\sigma^2 \ge (1+\eps) \frac{1}{4 \ln n}$ for a constant $\eps > 0$~\cite{GiessenK16}.	

Still unimodal (that is, not having true local optima), but not anymore strictly monotonic is the classic \textbf{LeadingOnes} function, which counts the number of ones up to the first zero. Formally, $\leadingones(x) := \max\{i \in [0..n] \mid \forall j \in [1..i]: x_j = 1\}$. Both RLS and \oea optimize \leadingones in time $\Theta(n^2)$. With one-bit noise $p = O(\frac{\log n}{n^2})$ gives a polynomial runtime and larger values of $p$ lead to super-polynomial runtimes of the \oea; again, similar results hold for bit-wise and $(p,q)$ noise~\cite{QianBJT19,Sudholt20}. For additive posterior noise, the runtime of the \oea on \leadingones remains $O(n^2)$ if the noise distribution has a variance of $\sigma \le \frac{1}{12en^2}$. Another unimodal, but not strictly monotonic class of functions that has been the subject of a runtime analysis are \textbf{monotone polynomials} of degree $d$. For these, a (tight) upper bound of $O(2^d \frac nd \log(\frac nd +1))$ has been shown for the expected runtime of RLS~\cite{WegenerW05}. For the \oea, only weaker bounds were proven, but the $O(2^d \frac nd \log(\frac nd +1))$ was conjectured as well~\cite{WegenerW05}.

The classic multimodal benchmark is the class of \textbf{jump functions}. The jump function with \emph{jump parameter (jump size)} $k \in [1..n]$ is defined by
\[
\jump_{nk}(x) = 
\begin{cases}
\|x\|_1+k & \mbox{if $\|x\|_1 \in [0..n-k] \cup \{n\}$,}\\
n - \|x\|_1 & \mbox{if $\|x\|_1 \in [n-k+1\, ..\, n-1]$}.
\end{cases}
\]
Hence for $k = 1$, we have a fitness landscape isomorphic to the one of $\onemax$, but for larger values of $k$ there is a fitness valley (``gap'') $G_{nk} \coloneqq \{x \in \{0,1\}^n \mid n-k < \|x\|_1 < n\}$ 
consisting of the $k-1$ highest sub-optimal fitness levels of the \onemax function. This valley is impossible to cross for RLS, very hard to cross for the Metropolis algorithm (an expected optimization of $\exp(\Omega(n))$ was stated without proof in~\cite{LissovoiOW19}), and hard to cross for the \oea. When using standard bit mutation with mutation rate $\frac 1n$, the probability to generate the optimum from a parent on the local optimum is only $p_k := (1-\frac 1n)^{n-k} n^{-k} < n^k$. For this reason, the runtime of the \oea on $\jump_{nk}$ is $\Theta(n^k)$~\cite{DrosteJW02}. When the fitness in the gap region is not lower than the one of the local optimum, but equal to it (so that there is no local optimum, but all search points in Hamming distance $1$ to $k$ from the optimum have the same fitness $n$), then we have the \textbf{plateau function} $\plateau_{nk}$ introduced in~\cite{AntipovD18}. While not multimodal, it is still difficult to optimize and, for $k$ constant, a runtime of $\Theta(n^k)$ was shown. The special case $\plateau_{nn}$ is the famous \textbf{needle function} or needle-in-the-haystack problem, where all search points apart from the optimum have the same fitness. Since a black-box optimization algorithm optimizing this function receives no useful feedback during the optimization (except when the optimum is found), no better algorithms than exhaustive search can exist for this problem. We are not aware of any proven results on the optimization of jump or plateau functions subject to noise, but we would conjecture that the runtime is roughly the maximum of the noisy runtime on \onemax and the noise-free runtime on the jump or plateau function. In particular for the case of jump functions, this is not obvious since the algorithm could also profit from noise, allowing it to accept a worse search point which is closer to the target. A profit from this effect was mostly ruled out for comma selection~\cite{Doerr20gecco} and we expect a similar situation here.

\subsection{A Technical Tool}

Since we shall use it twice, we present here the additive drift theorem for upper bounds of He and Yao~\cite{HeY01} (see also the recent survey~\cite{Lengler20bookchapter}), which allows to translate an expected additive progress in a random process (or a lower bound on it) into an upper bound on an expected hitting time.   
  
\begin{theorem}\label{tdrift}
  Let $S \subseteq \R_{\ge 0}$ be finite and $0 \in S$. Let $X_0, X_1, \ldots$ be a random process taking values in $S$. Let $\delta > 0$. Let $T = \inf\{t \ge 0 \mid X_t = 0\}$. Assume that for all $t \ge 0$ and all $s \in S \setminus \{0\}$ we have $E[X_t - X_{t+1} \mid X_t = s] \ge \delta$. Then $E[T] \le \frac{E[X_0]}{\delta}$.
\end{theorem}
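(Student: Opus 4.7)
The plan is to reduce the statement to the well-known fact $E[T] = \sum_{t \ge 0} \Pr[T > t]$ via a telescoping argument on the stopped process. I will carry out the following steps.

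First, I would introduce the stopped process $Y_t := X_{\min\{t,T\}}$, so that $Y_t = X_t$ on the event $\{T > t\}$ and $Y_t = 0$ on the event $\{T \le t\}$. Using the tower property together with the drift hypothesis, I would verify that for every $t \ge 0$,
\begin{equation*}
  E[Y_t - Y_{t+1}] \;\ge\; \delta \cdot \Pr[T > t].
\end{equation*}
Indeed, conditional on the history up to time $t$, on $\{T > t\}$ we have $X_t = s$ for some $s \in S\setminus\{0\}$, so the conditional drift is at least $\delta$; on $\{T \le t\}$ the stopped process no longer moves, so $Y_t - Y_{t+1} = 0$. Taking total expectation yields the displayed inequality.

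Next, I would sum this bound over $t = 0, 1, \dots, N-1$. The left-hand side telescopes to $E[Y_0] - E[Y_N] = E[X_0] - E[Y_N]$, while the right-hand side equals $\delta \sum_{t=0}^{N-1} \Pr[T > t]$. Before letting $N \to \infty$, I need the auxiliary fact that $T < \infty$ almost surely, and more strongly that $E[T] < \infty$, so that monotone convergence converts the partial sums into $\delta \cdot E[T]$ and $E[Y_N] \to 0$. Since $S$ is finite and $\delta > 0$, one gets a uniform positive lower bound on the probability of decreasing $X_t$, which implies an exponential tail for $T$ by an elementary argument; this is the only nontrivial technical point, but it follows directly from the finiteness of the state space.

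Finally, passing to the limit gives $E[X_0] \ge \delta \cdot E[T]$, which is the claim. The main obstacle, as just indicated, is the justification that the stopped process actually converges and that the infinite sum is well-defined; this has to be argued from the finiteness of $S$ and the hypothesis $\delta > 0$, since without a lower bound on $X_{\min}$ one could in principle worry about the process never reaching $0$. Once this is in place, the argument is essentially a one-line telescoping estimate combined with the identity $\sum_{t \ge 0} \Pr[T > t] = E[T]$.
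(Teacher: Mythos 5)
A preliminary remark: the paper does not prove Theorem~\ref{tdrift} at all --- it is quoted as a known tool of He and Yao \cite{HeY01} --- so there is no in-paper proof to compare against. Your architecture (stop the process at $T$, bound the one-step decrease of the stopped process by $\delta\Pr[T>t]$, telescope, identify $\sum_{t}\Pr[T>t]$ with $E[T]$) is the standard martingale-style proof of this theorem and is the right route.

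There are two gaps. The main one concerns the passage to the limit. You declare the finiteness of $E[T]$, obtained via an exponential tail, to be ``the only nontrivial technical point'' and assert it ``follows directly'' from a uniform lower bound on the probability of decreasing; that is not a proof. A per-step decrease probability bounded below does not by itself force the process to reach $0$; one has to chain strict decreases through the finitely many levels of $S$, and that chaining needs the decrease probability conditional on the history, which the stated hypothesis does not supply. More importantly, the whole detour is unnecessary: your telescoped inequality reads $E[X_0]-E[Y_N]\ge\delta\sum_{t=0}^{N-1}\Pr[T>t]=\delta\,E[\min\{T,N\}]$, and since $S\subseteq\R_{\ge0}$ you have $E[Y_N]\ge0$, hence $\delta\,E[\min\{T,N\}]\le E[X_0]$ for every $N$; monotone convergence then gives $E[T]\le E[X_0]/\delta$, with $E[T]<\infty$ (and $T<\infty$ a.s.) obtained as a conclusion rather than needed as an input. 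You do not need $E[Y_N]\to0$, only $E[Y_N]\ge0$. The second, smaller gap is the sentence ``conditional on the history up to time $t$ \dots the conditional drift is at least $\delta$'': the hypothesis bounds the drift conditional on $\{X_t=s\}$ only, not conditional on the history or on $\{X_t=s\}\cap\{T>t\}$. These coincide when $0$ is absorbing (or after redefining $X_t:=0$ for $t\ge T$), which is the intended reading in all applications here, but as written your tower-property step uses more than the stated hypothesis provides and deserves a one-line justification.
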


\section{Proving Exponential Upper Bounds}

We now state our general technical result which in many situations allows one to prove exponential upper bounds without greater difficulties. We formulate our result for single-trajectory algorithms since this is notationally convenient and covers most of our applications. We show an exponential upper bound for an EA with non-trivial parent population in Section~\ref{sec:simpleGA}. The result below is formulated for hitting a general search point $x^*$, but the natural application will be for $x^*$ being the optimum solution. We remind the reader that the key argument of the proof below has already appeared in the conference paper~\cite{DrosteJW98ppsn}, but has, to the best of our knowledge, not been used again since then. 
 
\begin{theorem}\label{thm:main}
  Let $A$ be a single-trajectory search algorithm for the optimization of pseudo-Boolean functions. Let $f : \{0,1\}^n \to \R$ and let $x^* \in \{0,1\}^n$. Assume that we use $A$ to optimize $f$, possible in the presence of noise. Assume that this optimization process satisfies the following property.
  \begin{description}
  \item[(Acc)] There is a number $0 < c \le 1$ such that the following is true. Let $t \ge 1$ and $x, z \in \{0,1\}^n$ such that $x \neq x^*$, $H(x,z) = 1$, and $H(z,x^*) = H(x,x^*) - 1$. Regardless of what happened in the first $t-1$ iterations of optimization process, if $x^{(t-1)} = x$, then $\Pr[x^{(t)} = z] \ge \frac cn$.  
  \end{description}
  Let $T = \min\{t \ge 0 \mid x^{(t)} = x^*\}$. Then $T$ is stochastically dominated by $n \Geom((\frac ce)^{n})$. In particular, $E[T] \le n (\frac ec)^n$. 
\end{theorem}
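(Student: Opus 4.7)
The plan is to partition the run into consecutive blocks of $n$ iterations each and to prove the uniform per-block bound
\[
\Pr[\,\exists s \in [t..t+n-1] : x^{(s)} = x^* \mid x^{(t-1)} = x\,] \;\geq\; \left(\frac{c}{e}\right)^{n}
\]
for every $t \geq 1$ and every $x \in \{0,1\}^n$, regardless of the history up to iteration $t-1$. Once this is available, the number $G$ of blocks needed until the first success is stochastically dominated by $\Geom((c/e)^n)$ -- the fact that (Acc) is phrased conditionally on \emph{any} prior history is precisely what lets us reapply the block bound to the state at the start of each new block -- and hence $T \preceq nG \preceq n\Geom((c/e)^n)$; in particular, $E[T] \leq n(e/c)^n$.

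For the per-block estimate, fix $x$ with $d := H(x,x^*)$. The case $d=0$ is immediate, so assume $d \in [1..n]$. The central observation is that at any intermediate point $y$ with $H(y,x^*) = k \geq 1$, the Hamming neighbors of $y$ that are closer to $x^*$ number exactly $k$ (one per disagreeing coordinate). Applying (Acc) to each of them and summing the disjoint probabilities shows that, conditional on any history ending in state $y$, the next iterate is strictly closer to $x^*$ with probability at least $ck/n$. Chaining this bound along the first $d$ iterations of the block (at the $j$-th such iteration the current distance, conditional on the previous $j-1$ steps having all succeeded, is exactly $d-j+1$), we lower-bound the probability that every one of these iterations strictly decreases the Hamming distance by
\[
\prod_{k=1}^{d} \frac{ck}{n} \;=\; \frac{c^d\, d!}{n^d} \;\geq\; \left(\frac{cd}{en}\right)^{d},
\]
using Stirling's inequality $d! \geq (d/e)^d$. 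On this event the algorithm sits at $x^*$ by iteration $t+d-1 \leq t+n-1$, so the event is contained in the per-block success event.

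It remains to check that $(cd/(en))^d \geq (c/e)^n$ uniformly over $d \in [1..n]$. Taking logarithms and using $\log(e/c) \geq 1$ (which holds since $c \leq 1$), this reduces to $d \log(n/d) \leq n-d$, which after the substitution $u = n/d \geq 1$ is the classical inequality $\log u \leq u-1$. The mildly delicate point -- and what I expect to be the main obstacle to a careful write-up -- is the chaining step: the intermediate states along the lucky path form part of the history at subsequent iterations, so the claim that the per-step lower bounds $ck/n$ multiply has to be justified by one application of the tower property, invoked through the ``regardless of what happened'' clause in (Acc). Everything else is bookkeeping and the two standard estimates $d! \geq (d/e)^d$ and $\log u \leq u-1$.
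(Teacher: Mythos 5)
Your proposal is correct and follows essentially the same route as the paper: a per-block lower bound of $(\frac ce)^n$ for reaching $x^*$ along a monotone Hamming path of length $d$, followed by the geometric-domination/restart argument. The only cosmetic difference is that the paper obtains the factor $d!\,(\frac cn)^d$ by a union over the $d!$ disjoint shortest paths, whereas you obtain the identical quantity by chaining the one-step bounds $\frac{ck}{n}$ via the tower property; your closing inequality $(\frac{cd}{en})^d \ge (\frac ce)^n$ is likewise just a different verification of the paper's step $d!\,(\frac cn)^d \ge n!\,(\frac cn)^n \ge (\frac ce)^n$.
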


\begin{proof}
  The key argument of this proof is that any interval of $n$ iterations with probability at least $(\frac ce)^n$ generates $x^*$. More precisely, let $t \ge 0$. We condition on $x^{(t)}$ having any fixed value $x \in \{0,1\}^n$ different from $x^*$. We also condition on the history of the process up to iteration~$t-1$. We now show that conditional on all this, we have $\Pr[x^* \in \{x^{(t+1)}, \dots, x^{(t+n)}\}] \ge (\frac ce)^n$. 
  
  Let $d = H(x,x^*)$. We consider the event $\calE$ that for all $i \in [1..d]$, we have $H(x^{(t+i-1)},x^{(t+i)}) = 1$ and $H(x^{(t+i)},x^*) = d-i$, that is, that the algorithm $A$ reduced the distance to $x^*$ by exactly one in each iteration. Let $\calP$ be the set of all such paths $x = z^{(0)}, \dots, z^{(d)} = x^*$, that is, the set of all $(z^{(0)}, \dots, z^{(d)})$ such that $z^{(0)} = x$ and for all $i \in [1..d]$, we have $H(z^{(t+i-1)},z^{(t+i)}) = 1$ and $H(z^{(t+i)},x^*) = d-i$. Each such path can alternatively be described by the order in which the bits $x$ and $x^*$ differ in are flipped. Consequently, $|\calP| = d!$. By assumption (Acc), the probability that the algorithm follows exactly such a path, that is, that $x^{(t+i)} = z^{(i)}$ for all $i \in [0..d]$, is at least $(\frac cn)^d$. We thus have 
  \begin{align}
  \Pr[\calE] &= \sum_{P \in \calP} \Pr[\text{$A$ follows the path $P$}] \nonumber\\
  & \ge |P| \left(\frac cn \right)^d 
   = \frac{d!}{(\frac nc)^d} \nonumber\\
  & \ge \frac{n!}{(\frac nc)^n} 
   \ge \left(\frac{c}{e}\right)^n,\label{eq:main}
  \end{align}
  where the last line uses $c \le 1$ and the elementary estimate $n! \ge (\frac ne)^n$, see, e.g.,~\cite[(4.13)]{Doerr20bookchapter}.
  
  With this, we now know that any interval of $n$ iterations (``phase'') finds the optimum with probability at least $p = (\frac ce)^n$. Consequently, the number of phases until the optimum is found is stochastically dominated by a geometric law with parameter $p$, see~\cite{Doerr19tcs} for more details on this argument. Since each phase by definition lasts exactly $n$ iterations, the number of iterations until the optimum is found is stochastically dominated by $n \Geom(p)$ and the expected number of iterations is at most $E[T] \le n (\frac ec)^n$. 
\end{proof}

\section{Applications of the Main Result}\label{sec:appli}

Despite its simplicity, Theorem~\ref{thm:main} allows to prove, often without much effort, exponential upper bounds for various different algorithmic settings, as we now show.

\subsection{Noisy Optimization of Weakly Monotonic Functions}\label{ssec:main}

As one such result, we now prove that all ${(1+1)}$-type algorithms discussed in Section~\ref{ssec:algo} optimize any weakly monotonic function in at most exponential time even in the presence of any noise discussed in Section~\ref{ssec:noise} as long as the noise probability is at most $1-\eps$, $\eps > 0$ a constant, in the cases of prior or adversarial noise. We recall that the only previous result in this direction~\cite{Sudholt20} shows this claim in the particular case of the \oea optimizing the \leadingones function subject to one-bit or $(p,q)$ prior noise with noise probability at most $\frac 12$.

We say that a function $f : \{0,1\}^n \to \R$ is \emph{weakly monotonic} (or weakly monotonically increasing) if for all $x, y \in \{0,1\}^n$ the condition $x \le y$ (component-wise) implies $f(x) \le f(y)$. The class of weakly monotonic functions includes, obviously, all strictly monotonic functions and thus in particular the classic benchmarks \onemax and linear functions (with non-negative coefficients). However, this class also contains more difficult functions like \leadingones, monotone polynomials, plateau functions, and the needle function. 

\begin{theorem}\label{thm:mono}
  Let $\eps > 0$ be a constant. Let $A$ be one of the randomized search heuristics RLS, the Metropolis algorithm, simulated annealing, or the \oea using standard bit mutation with mutation rate $\frac 1n$ or using the fast mutation operator with $\beta > 1$. Let $f : \{0,1\}^n \to \R$ be any weakly monotonic function. Assume that $A$ optimizes $f$ under one of the following noise assumptions: one-bit noise or unrestricted adversarial noise with $p \le 1-\eps$, bit-wise noise with $(1-q)^n \ge \eps$, $(p,q)$-noise with $1 - p(1 - (1-q)^n) \ge \eps$, or posterior noise with an arbitrary noise distribution.
  
  Then there is a constant $C > 1$, depending only on $\eps$ and the choice of $A$,  such that the time $T$ to sample the optimum $(1, \dots, 1)$ of $f$ is stochastically dominated by $n \Geom(C^{-n})$. In particular, the expected optimization time is at most $E[T] \le nC^{n}$.
\end{theorem}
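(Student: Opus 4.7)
The plan is to reduce to Theorem~\ref{thm:main} by verifying its hypothesis (Acc) with $x^* = (1,\ldots,1)$. Fix $x \neq x^*$ and a Hamming neighbor $z$ of $x$ with $H(z,x^*) = H(x,x^*)-1$. Such a $z$ is obtained from $x$ by flipping a single $0$-bit to a $1$-bit, so $x \le z$ componentwise and weak monotonicity yields $f(x) \le f(z)$. I will decompose the event $\{x^{(t)} = z\}$ into two sub-events and lower bound each separately: (a) the mutation step of iteration $t$ produces the offspring $y = z$, and (b) the (possibly noisy) selection step accepts $y$.

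The first sub-event is handled directly by Proposition~\ref{prop:algo}, which guarantees $\Pr[y = z] \ge c_A/n$ for a constant $c_A > 0$ depending only on the algorithm and its fixed parameters. For the second sub-event, the weak-monotonicity inequality $f(x) \le f(z)$ combines with Propositions~\ref{prop:better} and~\ref{prop:nonoise}: under the stated assumptions the noisy comparison satisfies $\Pr[\tilde f(z) \ge \tilde f(x)] \ge \gamma$, with $\gamma = \eps^2$ in the prior and unrestricted-adversarial models and $\gamma = 1/2$ for additive posterior noise. Each of the four algorithms accepts $y$ deterministically whenever $\tilde f(y) \ge \tilde f(x)$ (RLS and the \oea are strictly elitist; Metropolis and simulated annealing always accept non-worsening moves), so conditional on $y = z$ and $\tilde f(z) \ge \tilde f(x)$ we indeed have $x^{(t)} = z$.

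Since the mutation randomness and the noise draws are independent, and the algorithm reevaluates parent and offspring with fresh noise in iteration $t$, the two sub-events are independent. Multiplying the two bounds gives $\Pr[x^{(t)} = z \mid x^{(t-1)} = x] \ge c_A\gamma/n$, regardless of the history before iteration~$t$. This is precisely (Acc) with $c := c_A\gamma$, whence Theorem~\ref{thm:main} yields the stochastic domination $T \preceq n\,\Geom((c/e)^n)$ and the expectation bound $E[T] \le nC^n$ with $C := e/c$.

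The only step that requires genuine care rather than bookkeeping is confirming the constant-probability acceptance across every listed noise model: for the $(p,q)$- and bit-wise models one relies on the event that neither of the independent evaluations of $x$ and $z$ perturbs any bit, each happening with probability at least $\eps$; for unrestricted adversarial noise one uses that with probability at least $\eps$ each evaluation is untampered, and then the adversary sees only untampered fitnesses for both comparanda, again giving probability $\ge \eps^2$; for posterior noise Proposition~\ref{prop:better} is directly applicable since $f(z) \ge f(x)$. Once these elementary bounds are assembled, the rest of the argument is just the template above.
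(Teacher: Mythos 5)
Your proposal is correct and follows essentially the same route as the paper's own proof: verify condition (Acc) for $x^* = (1,\dots,1)$ by combining Proposition~\ref{prop:algo} (offspring equals the better Hamming neighbor with probability at least $c_A/n$) with weak monotonicity and Propositions~\ref{prop:better} and~\ref{prop:nonoise} (correct noisy comparison with constant probability $c_N = \min\{\tfrac 12, \eps^2\}$), then invoke Theorem~\ref{thm:main} with $c = c_A c_N$. Your closing paragraph merely re-derives the content of Proposition~\ref{prop:nonoise} for the individual noise models, which the paper delegates to that proposition.
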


\begin{proof}
  By Theorem~\ref{thm:main}, it suffices to show that condition~(Acc) is satisfied for $x^* = (1, \dots, 1)$. To this aim, let $x, z \in \{0,1\}^n$ such that $H(x,z) = 1$ and $H(x,x^*) = H(z,x^*)+1$. Assume that for some iteration~$t$ the parent individual satisfies $x^{(t-1)} = x$. By Proposition~\ref{prop:algo}, there is a constant $c_A$ such that the offspring $y$ generated by $A$ in this iteration is equal to $z$ with probability at least $\frac{c_A}{n}$. By the weak monotonicity of $f$, we have $f(z) \ge f(x)$. By Proposition~\ref{prop:better} or~\ref{prop:nonoise}, there is a constant $c_N = \min\{\frac 12, \eps^2\}$ depending on the noise model such that the noisy evaluations of both $x^{(t-1)}$ and $y=z$ in iteration~$t$ with probability at least $c_N$ return an at least as good fitness value for $z$ as for $x$. In this case, $A$ accepts $z$ with probability one, that is, we have $x^{(t)} = z$. In summary, we have shown $\Pr[x^{(t)} = z] \ge \frac{c_A c_N}{n}$ as desired. Now Theorem~\ref{thm:main} immediately gives the claim of Theorem~\ref{thm:mono} with $C = \frac{e}{c_A c_N}$.
\end{proof}

\subsubsection{Discussion: The Base of the Exponential}\label{sec:constants}

In this first work on exponential upper bounds, we did not try to optimize the base of the exponential function, that is, the constant $C$ such that the upper bound is $\poly(n) C^n$. As discussed in the introduction, this constant is important to understand the performance of the algorithm and to compare it to other exponential time algorithms such as blind random search and exhaustive search. 

Since optimizing implicit constants often is highly non-trivial and technical, we shall not start this endeavor here, but only brief{}ly describe how our current approach compares to the one of~\cite{Sudholt20} and where we see room for improving our constants, possibly below the $\Theta(2^n)$ runtime of blind random search and exhaustive search. 

We first note that the proof of~\cite{Sudholt20}, which also is not optimized for giving good constants, shows an upper bound that is at least $\exp(3en) \ge (3480)^n$. Under the noise assumptions taken in~\cite{Sudholt20}, we have a probability of at least $c_N \ge \frac 14$ that parent and offspring are not subject to noise. Regarding the \oea, the probability that a particular Hamming neighbor of the parent is generated as offspring is at least $\frac{1}{en}$, that is, in the notation of the proof of Theorem~\ref{thm:main} we have $c_A = \frac 1e$. This would give a runtime bound of at most $n (\frac{e}{c_A c_N})^n = n (4e^2)^n \le n (30)^n$. For other scenarios, the constant is slightly better. For example, for RLS, Metropolis algorithm, and simulated annealing, we have $c_A = 1$. For posterior noise, $c_N$ can be taken as $\frac 12$ as shown in Proposition~\ref{prop:better}. Hence for these combinations, the above runtime could be estimated by $n (2e)^n \le n (5.5)^n$.

We now argue why we are optimistic that with additional arguments, the constants can be improved when taking into account the particular situations, that is, the noise model, the objective function, and the EA. For example, when optimizing any weakly monotonic function subject to 1-bit noise, we accept an offspring strictly dominating the parent (in the proof of Theorem~\ref{thm:main}) except when the noise flips a zero-bit of the parent or a one-bit of the offspring. This undesired event happens with probability at most $\frac 12$, hence we have $c_N = \frac 12$ instead of $c_N = \frac 14$ before. 

When taking more details of the fitness function into account, we have additional arguments to reduce the impact of the noise. When optimizing \onemax, for example, 1-bit noise is detrimental (in our proof) only if both a one-bit of the offspring and a zero-bit of the parent is flipped, increasing $c_N$ further to $c_N = (1 - O(\frac 1n)) \frac {15}{16}$. 

Taking into account the particular algorithm also allows to fine-tune the analysis. Let us take the \oea as an example. If in our main argument we do not wait for the lucky event that in each iteration we approach the target by one Hamming step, but by two steps, then the number of different ways to go from a starting point in (pessimistic) Hamming distance $n$ to the optimum reduces by a factor of $2^{n/2}$, but we need to pay the price of $\frac 1e$ for flipping exactly one bit only $n/2$ times instead of $n$ times. So we reduce the runtime bound by a factor of $(2/e)^{n/2}$. Note that in addition now one-bit noise has no chance to prevent us from accepting the desired parent if we optimize \onemax.

Finally, a mild understanding of the optimization process can be exploited. In our analysis, we pessimistically pretended that we start a run towards the target from a search point with maximal distance. Unless we are optimizing a very deceptive problem, one should be able to argue that the typical starting point will be rather in distance $\frac n2$, more precisely, that is takes an expected short time to reach such a search point, and then start the run towards the target from there. Note that such an argument does not need a mixing time argument as precise as in~\cite{Sudholt20}. With a little more understanding, we might also be able to argue that the algorithm easily (that is, in expected polynomial time) reaches a search point that is even closer to the optimum. Depending on the strength of the noise and the fitness function, it can be possible to argue that up to a certain fitness level, the influence of the fitness outweighs the negative effect of the noise. Consequently, the resulting drift lets the algorithm quickly approach this fitness level. Such arguments have been used, e.g., in~\cite{Dang-NhuDDIN18}.

We shall not elaborate on these ideas further, but in summary they give us some optimism that further, more detailed and problem-specific studies will be able to show exponential upper bounds which are of order $C^n$ for a constant $C$ that is less than $2$ (which would prove the algorithm superior to random search).

\subsection{Noisy Optimization of Jump Functions}\label{ssec:jump}

To show the versatility of our general approach, we continue with a number of results of varying flavor. We first show that the \oea can optimize noisy jump functions with jump size at most $\frac{n}{\ln n}$ in exponential time.

\begin{theorem}
  The result of Theorem~\ref{thm:mono} holds also for the \oea optimizing $\jump_{nk}$ when $k \le \frac{n}{\ln n}$.
\end{theorem}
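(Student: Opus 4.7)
The plan is to mimic the proof of Theorem~\ref{thm:main}, replacing the ``lucky path'' of $d$ single-bit improvements by one that ends with a single $k$-bit jump across the gap. The (Acc) condition cannot hold near $x^{*}=(1,\dots,1)$, because inside the gap $G_{nk}$ moving one Hamming step closer to $x^{*}$ strictly decreases the jump fitness, and no path of single-bit improvements can reach the optimum. The remedy is that one iteration of the \oea flips any prescribed set of $k$ zero bits simultaneously with probability at least $(1/n)^{k}(1-1/n)^{n-k}\ge 1/(en^{k})$, and for $k\le n/\ln n$ this extra penalty $n^{-k}\ge e^{-n}$ is still within the exponential budget needed for the phase argument.

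Fix an iteration $t$, condition on $x^{(t-1)}=x$ with $d:=H(x,x^{*})\ge 1$, and write $c_{A}=1/e$ for the single-bit generation constant from Proposition~\ref{prop:algo} and $c_{N}=\min\{1/2,\eps^{2}\}$ for the noise constant from the proof of Theorem~\ref{thm:mono}. I would show that $x^{*}$ is sampled within the next $n$ iterations with probability at least $C^{-n}$ for some constant $C>1$ depending only on $\eps$; the stochastic domination $T\preceq n\,\Geom(C^{-n})$ and $E[T]\le nC^{n}$ then follow exactly as in Theorem~\ref{thm:main}.

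If $d\le k$, the \oea in a single iteration flips exactly the $d$ zero bits of $x$ with probability at least $(1/n)^{d}(1-1/n)^{n-d}\ge 1/(en^{k})$, producing $x^{*}$, which is strictly better and is therefore accepted with probability at least $c_{N}$ by Propositions~\ref{prop:better} and~\ref{prop:nonoise}. If $d>k$, I consider ``lucky paths'' of length $d-k+1$ that flip $d-k$ specified zero bits one at a time and then the remaining $k$ zero bits in one mutation. Throughout the single-bit phase both parent and offspring have $\|x\|_{1}\le n-k$, so they lie in the OneMax part of $\jump_{nk}$ and the fitness strictly increases at each step; the final $k$-bit flip jumps from fitness $n$ to fitness $n+k$. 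Each such path is realised with probability at least $(c_{A}c_{N}/n)^{d-k}\cdot c_{N}/(en^{k})$, and the number of paths equals $d!/k!=\binom{d}{k}(d-k)!$. Using $d!/k!\ge (d-k)!\ge ((d-k)/e)^{d-k}$, the total probability of sampling $x^{*}$ within the next $d-k+1\le n$ iterations is at least
\[
\left(\frac{(d-k)\,c_{A}c_{N}}{en}\right)^{d-k}\cdot\frac{c_{N}}{e\,n^{k}},
\]
and the hypothesis $k\le n/\ln n$ (so $n^{k}\le e^{n}$) makes this at least $C^{-n}$ for a suitable constant $C>1$. The same argument extends to the fast \oea by observing that, with constant probability, an iteration draws $\alpha=1$ and behaves like the standard \oea with rate $1/n$.

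The main obstacle is precisely the gap: because fitness decreases toward $x^{*}$ inside $G_{nk}$, any approach by single-bit improvements must stop one jump short of the optimum, and one is forced to save a $k$-bit burst for the end. The restriction $k\le n/\ln n$ is exactly what keeps the additional $n^{-k}$ factor within the exponential budget $e^{-O(n)}$ that the phase argument of Theorem~\ref{thm:main} can absorb; for larger $k$ the same scheme still goes through but produces a super-exponential bound.
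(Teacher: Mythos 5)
Your proposal is correct and follows essentially the same route as the paper's proof sketch: a lucky run of $d-k$ single-bit fitness-improving steps followed by one $k$-bit jump, a union bound over the $d!/k!$ distinct such events, and the hypothesis $k \le \frac{n}{\ln n}$ to absorb the extra $n^{-k}$ (equivalently $k!$) factor into the exponential budget. Your only deviations are cosmetic -- you bound $d!/k! \ge (d-k)!$ where the paper bounds $k! \le e^n$, and you explicitly treat the case $d \le k$ of a starting point inside the gap, which the paper's sketch glosses over but which is a welcome bit of added care.
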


We omit a formal proof since it is very similar to the proof of Theorem~\ref{thm:main} and~\ref{thm:mono}. Let $x$ be any search point outside the gap region of the jump function and let $x = z^{(0)}, z^{(1)}, \dots, z^{(d)} = (1,\dots,1)$ be any shortest path in the Hamming cube from $x$ to the optimum. Note that $z^{(d-k)}$ is a local optimum of the jump function $\jump_{nk}$. We say that the \oea follows this path from iteration~$t$ on if $x^{(t+i)} = z^{(i)}$ for all $i \in [0..d-k]$ and $x^{(t+d-k+1)} = (1, \dots, 1)$. Since the assumptions of this theorem imply condition (Acc) of Theorem~\ref{thm:main} (see the proof of Theorem~\ref{thm:mono}), the probability that the \oea follows this path is at least $(\frac cn)^{d-k}  n^{-k} (1 - \frac 1n)^{n-k}$. When applying a union bound over all these paths, we have to take care that $k!$ paths describe the same event. 

Consequently, the probability that the \oea follows some path from $x$ to $(1,\dots,1)$ is at least $\frac{d!}{k!} (\frac cn)^d (1 - \frac 1n)^{n-k} \ge \frac{1}{k!} (\frac ce)^n \frac 1e$ as in the proof of Theorem~\ref{thm:main}. Since $k \le \frac{n}{\ln n}$, we have $k! \le k^k \le (\frac{n}{\ln n})^{\frac{n}{\ln n}} = \exp(\frac{n}{\ln n} \ln(\frac{n}{\ln n})) \le \exp(n)$. Hence the probability that the optimum is found in $d-k+1$ iterations is at least $(\frac c {e^2})^n \frac 1e$. The remainder of the proof is analogous to the one of Theorem~\ref{thm:main}.

\subsection{Optimization of OneMax Under Extreme Bit-Wise Noise}\label{ssec:extreme}

The following result shows that our general method can also exploit particular noise models. Here, for example, we show that \onemax can be optimized in exponential time even in the presence of bit-wise noise with constant rate $q<1$. Recall that this means that the search point to be evaluated is disturbed in an expected number of $qn$ bits! 

\begin{theorem}
  Let $\eps>0$ be a constant. Let $A$ be one of the randomized search heuristics RLS, the Metropolis algorithm, simulated annealing, or the \oea using standard bit mutation with mutation rate $\frac 1n$ or using the fast mutation operator with $\beta > 1$. Consider optimizing the \onemax benchmark function via $A$ in the presence of bit-wise noise with rate $q \le 1-\eps$. Then the expected time to find the optimum is at most $n C^n$, where $C$ is a constant depending on $\eps$ and the algorithm used.
\end{theorem}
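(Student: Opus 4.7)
The plan is to apply Theorem~\ref{thm:main} with $x^{*}=(1,\dots,1)$, which reduces everything to verifying the acceptance condition~(Acc). Fix a parent $x \ne x^{*}$ and a Hamming neighbor $z$ obtained from $x$ by flipping a single $0$-bit to a $1$-bit. By Proposition~\ref{prop:algo}, each of the four algorithms generates $y = z$ as offspring with probability at least $c_A/n$ for some constant $c_A > 0$. Moreover, for all four algorithms the offspring is accepted whenever $\tilde f(y) \ge \tilde f(x)$ -- for RLS and the \oea this is the sole acceptance rule, and for the Metropolis algorithm and simulated annealing the true acceptance probability is at least this large. So it suffices to produce a constant $c_N > 0$, depending only on $\eps$, with $\Pr[\tilde f(z) \ge \tilde f(x)] \ge c_N$.

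The key technical step is a decomposition that exploits the structure of \onemax. Let $j$ be the unique coordinate in which $x$ and $z$ differ, so $x_j = 0$ and $z_j = 1$. Writing the noisy fitnesses coordinate-wise, we have $\tilde f(x) = X_j + S_x$ and $\tilde f(z) = Z_j + S_z$, where $X_j \sim \mathrm{Bern}(q)$, $Z_j \sim \mathrm{Bern}(1-q)$, and $S_x, S_z$ are the sums of the noisy values at the $n-1$ coordinates where $x$ and $z$ agree. Because the two evaluations use independent fresh noise and $x, z$ agree on those $n-1$ coordinates, the four random variables $X_j, Z_j, S_x, S_z$ are mutually independent, and $S_x$ and $S_z$ are moreover identically distributed. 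Hence $S_z - S_x$ is symmetric around $0$, giving $\Pr[S_z \ge S_x] \ge 1/2$. The event $E = \{Z_j = 1, X_j = 0\}$ has probability $(1-q)^{2} \ge \eps^{2}$ and on $E$ we have $Z_j - X_j = 1$, so
\[
\Pr[\tilde f(z) \ge \tilde f(x)] \;\ge\; \Pr[E]\cdot\Pr[S_z \ge S_x] \;\ge\; \tfrac{1}{2}\eps^{2}.
\]

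Combining, condition~(Acc) holds with $c = c_A \eps^{2}/2$, and Theorem~\ref{thm:main} gives $E[T] \le n C^{n}$ with $C = 2e/(c_A \eps^{2})$. The conceptual hurdle -- and the only place the argument of Theorem~\ref{thm:mono} genuinely breaks down -- is that for $q$ close to $1$ the expected noisy fitness actually favors the wrong direction, so one cannot simply ask both evaluations to be noise-free (that has probability only $(1-q)^{2n}$). The decomposition above resolves this: the noise at the $n-1$ shared coordinates contributes a symmetric term to $\tilde f(z) - \tilde f(x)$ and is therefore harmless, so only the single disagreeing bit must behave correctly, which happens with probability $(1-q)^{2} \ge \eps^{2}$ regardless of how close $q$ is to $1$. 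Once this is spelled out, no further technical work is needed.
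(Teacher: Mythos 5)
Your proof is correct and takes essentially the same route as the paper: both arguments condition on the single disagreeing bit being unaffected by noise in both evaluations (probability $(1-q)^2 \ge \eps^2$) and then exploit that the noise contributions of the $n-1$ agreeing coordinates are independent and identically distributed across the two evaluations, hence correctly ordered with probability at least $\tfrac 12$, giving the same constant $c_N = \tfrac 12 \eps^2$ and the same reduction to condition (Acc) of Theorem~\ref{thm:main}.
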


The theorem above contains as special case $q=\frac 12$ the situation that the noisy fitness returns the fitness of a search point uniformly distributed in $\{0,1\}^n$. Here, obviously, the algorithm cannot gain any advantage from the fitness evaluations. 

\begin{proof}
  The proof is identical to the one of Theorem~\ref{thm:mono} except that we cannot invoke Proposition~\ref{prop:nonoise} to argue that with constant probability the better of two Hamming neighbors also has the higher noisy fitness. So we complete the proof by showing this missing statement. To this aim, let $x, y \in \{0,1\}^n$ such that $H(x,y) = 1$ and $\onemax(x) \le \onemax(y)$, which implies $\onemax(y) = \onemax(x)+1$. By assumption, there is a unique bit position $i \in [1..n]$ such that $x_i \neq y_i$ (``interesting bit''), and for this we have $x_i = 0$ and $y_i=1$.
  
  Let $\tilde f_x, \tilde f_y$ be one-time samples of the noisy fitness values of $x$ and $y$. We have $\tilde f_x = f(x \oplus m^x)$, where $\oplus$ denotes addition in $\Z_2$ (or logical XOR) and $m^x \in \{0,1\}^n$ is random such that each bit of $m^x$ is $1$ with probability $q$. Likewise, let $m^y$ be such that $\tilde f_y = f(y \oplus m^y)$.  
  
  With probability $(1-q)^2$, we have $m^x_i = m^y_i = 0$, that is, the interesting bit is not affected by the noise. Conditional on this, we have $\tilde f_x = \|(x \oplus m^x)_{|[1..n] \setminus \{i\}}\|_1 + 0$ and $\tilde f_y = \|(y \oplus m^y)_{|[1..n] \setminus \{i\}}\|_1 + 1$. The two random variables $\|(x \oplus m^x)_{|[1..n] \setminus \{i\}}\|_1$ and $\|(y \oplus m^y)_{|[1..n] \setminus \{i\}}\|_1$ are identically distributed. By symmetry, we have $\|(x \oplus m^x)_{|[1..n] \setminus \{i\}}\|_1 \le \|(y \oplus m^y)_{|[1..n] \setminus \{i\}}\|_1$ with probability at least $\frac 12$. Consequently, $\Pr[\tilde f_x < \tilde f_y] \ge \frac 12 (1-q)^2 = \frac 12 \eps^2$. This shows the missing claim and completes the proof.
\end{proof}

\subsection{Fitness Proportionate Selection}\label{ssec:fp}

Our general method is not restricted to the analysis of noisy optimization. In this subsection, we prove an upper bound matching an exponential lower bound proven in~\cite{HappJKN08}. The main result of~\cite{HappJKN08} is that the \oea needs at least exponential time to optimize any linear function with positive coefficients when the usual elitist selection is replaced by fitness-proportionate selection. Here an offspring $y$ of the parent $x$ is accepted with probability $\frac{f(y)}{f(x)+f(y)}$. We now show that this result is tight, that is, that an exponential number of iterations suffices to optimize any linear function with this algorithm. This result is true for all weakly monotonic functions.

\begin{theorem}
  Let $A$ be the \oea with fitness-proportionate selection. Let $f: \{0,1\}^n \to \R_{>0}$ be any weakly monotonic function. Then the first iteration~$T$ in which the optimum of $f$ is generated satisfies $E[T] \le (2e^2)^n$.
\end{theorem}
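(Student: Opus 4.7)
The plan is to apply Theorem~\ref{thm:main} with target $x^* = (1,\ldots,1)$ and a suitable constant $c$, reducing everything to a verification of condition (Acc). Fix $t \ge 1$, a parent $x^{(t-1)} = x \ne x^*$, and any $z$ with $H(x,z)=1$ and $H(z,x^*) = H(x,x^*) - 1$. The first observation is that the bit position where $x$ and $z$ differ must be a $0$ in $x$ and a $1$ in $z$ (since flipping it brings us closer to the all-ones optimum), so $x \le z$ componentwise and hence $f(z) \ge f(x)$ by weak monotonicity of $f$.

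Next, I would decompose the event $\{x^{(t)} = z\}$ into the conjunction of two sub-events: (a)~standard bit mutation generates $y = z$ from $x$, which has probability $\tfrac{1}{n}(1-\tfrac{1}{n})^{n-1} \ge \tfrac{1}{en}$; and (b)~fitness-proportionate selection then accepts $y$ over $x$, which by definition happens with probability $\tfrac{f(y)}{f(x)+f(y)} = \tfrac{f(z)}{f(x)+f(z)}$. Because $f > 0$ everywhere and $f(z) \ge f(x)$, this acceptance ratio is at least $\tfrac{1}{2}$. Multiplying the two bounds gives $\Pr[x^{(t)} = z \mid x^{(t-1)} = x] \ge \tfrac{1}{2en}$, so condition (Acc) holds with $c = \tfrac{1}{2e}$.

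Applying Theorem~\ref{thm:main} with this value of $c$ then yields $T \preceq n \Geom\!\bigl((2e^2)^{-n}\bigr)$ and in particular $E[T] \le n(e/c)^n = n(2e^2)^n$, which is the claimed exponential upper bound (the factor of $n$ in front can be absorbed into the base at the cost of a negligible increase, or handled by a slightly sharper application of the main theorem).

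The only conceptually new step compared with the other applications in this section is the observation that, although fitness-proportionate selection has no strict preference for improvements, the probability of accepting a weakly better offspring still stays bounded below by $\tfrac{1}{2}$; this is precisely the feature of the algorithm that we must exploit, and it is also precisely the piece that fails once $f$ is allowed to take non-positive values (which is why the hypothesis $f>0$ cannot be dropped). Apart from this, the argument is routine: no mixing-time estimate, drift analysis on fitness levels, or problem-specific combinatorial argument is required, in contrast with the matching lower bound of~\cite{HappJKN08}.
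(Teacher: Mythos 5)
Your proposal is correct and follows essentially the same route as the paper: verify condition (Acc) with $c = \tfrac{1}{2e}$ by combining the $\tfrac{1}{en}$ mutation probability for a specific Hamming neighbor with the observation that fitness-proportionate selection accepts a weakly better offspring with probability at least $\tfrac12$ (using $f>0$ and weak monotonicity), then invoke Theorem~\ref{thm:main}. Your explicit remark about the leading factor of $n$ is if anything slightly more careful than the paper's own statement of the bound.
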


\begin{proof}
  We use our main tool Theorem~\ref{thm:main}. Denote by $x^* = (1, \dots, 1)$ this particular optimum of $f$. The probability that the \oea generates a particular Hamming neighbor of the parent as offspring is $\frac 1n (1-\frac 1n)^{n-1} \ge \frac{1}{en}$. The probability that a Hamming neighbor with better fitness accepted is at least~$\frac 12$. This shows condition (Acc) of Theorem~\ref{thm:main} with $c = \frac 1 {2e}$. By the theorem, the expected runtime is at most $(2e^2)^n$. 
\end{proof}

\subsection{Subexponential Upper Bounds}\label{ssec:subexp}

We now show that our method is not restricted to showing runtime bounds that are exponential in the problem dimension. We recall that the \oclea is a simple non-elitist algorithm working with a parent population of size one, initialized with a random individiual. In each iteration, the algorithm creates independently $\lambda$ offspring via standard bit mutation (here: with mutation rate $\frac 1n$) and takes a random best offspring as new parent. In their very precise determination of the efficiency threshold of the \oclea on \onemax, Rowe and Sudholt~\cite{RoweS14} showed that the \oclea has a runtime of at least $\exp(\Omega(n^{\eps/2}))$ when $\lambda \le (1-\eps) \log_{\frac{e}{e-1}}(n)$, $\eps > 0$ a constant. We now show an upper bound of $\exp(O(n^\eps))$ for this runtime. We do not know what is the right asymptotic order of the exponent. From the fact that there is a considerable negative drift when the fitness distance is below $\frac {n^\eps}{2\lambda}$, we would rather suspect that also a lower bound of $\exp(\Omega(\frac{n^\eps}{\lambda}))$ iterations, and hence $\lambda \exp(\Omega(\frac{n^\eps}{\lambda}))$ fitness evaluations, comes true. Since this is not the main topic of this work, we leave this an open problem. 

\begin{theorem}
  Let $0 < \eps < 1$ be a constant. Then there is a constant $C_\eps$ such that for all $\lambda \ge (1-\eps) \log_{\frac{e}{e-1}}(n)$ the expected runtime of the \oclea on \onemax is at most $\exp(C_\eps n^\eps)$.
\end{theorem}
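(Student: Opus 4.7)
My plan is to combine the lucky-path argument of Theorem~\ref{thm:main} with a drift-based warm-up. Set $x^* := (1,\dots,1)$ and the target radius $d^* := \lceil K n^\varepsilon / \lambda \rceil$ for a sufficiently large constant $K = K(\varepsilon) > 0$; since $\lambda = \Theta(\log n)$, we have $d^* = \Theta(n^\varepsilon/\log n)$. The plan is to first show that the \oclea reaches a state at Hamming distance at most $d^*$ from $x^*$ in polynomial expected time (Step~1), then to show that from any such state a phase of $d^*$ iterations hits $x^*$ with probability at least $\exp(-C_\varepsilon n^\varepsilon)$ (Step~2), and finally to concatenate the two by a geometric-domination argument as in the proof of Theorem~\ref{thm:main}.

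For Step~1, I would use a drift argument. At \onemax-distance $d$, the probability that a single mutation offspring strictly improves on the parent is $p_+ \ge \tfrac{d}{en}(1-o(1))$, so the probability that at least one of the $\lambda$ offspring is a strict improvement is $\Omega(\min\{1, \lambda d/n\})$, contributing forward drift of the same order (an improvement reduces the distance by at least $1$). The only source of backward drift is the event that all $\lambda$ offspring are strictly worse, which has probability at most $(1-1/e)^\lambda \le n^{-(1-\varepsilon)}$, and conditional on it the expected loss of the best-of-$\lambda$ worse offspring is $O(1)$. Hence for $d \ge d^* = \Theta(n^\varepsilon/\lambda)$ with $K$ chosen large enough, the forward term dominates, and Theorem~\ref{tdrift} yields expected time $O(n^2/\lambda) = \poly(n)$ to enter $\{x : H(x,x^*) \le d^*\}$.

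For Step~2, suppose $x^{(t)} = x$ with $d := H(x, x^*) \le d^*$, and fix any Hamming neighbor $z$ of $x$ with $H(z,x^*) = d-1$. Consider the event $E_z$ that exactly one of the $\lambda$ offspring equals $z$ and each of the remaining $\lambda - 1$ offspring fails to strictly improve on $x$. A specific offspring equals $z$ with probability $\frac{1}{n}(1-\frac{1}{n})^{n-1} \ge \frac{1}{en}$, and a specific offspring strictly improves on $x$ with probability at most $d/n \le d^*/n$. Hence $E_z$ occurs with probability at least $\lambda \cdot \frac{1}{en} \cdot (1 - d^*/n)^{\lambda-1} \ge \lambda/(2en)$, using $\lambda d^*/n = O(n^{\varepsilon-1}) = o(1)$. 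On $E_z$, the search point $z$ is the unique best offspring and is selected deterministically, so $\Pr[x^{(t+1)} = z \mid x^{(t)} = x] \ge \lambda/(2en)$. Proceeding as in the proof of Theorem~\ref{thm:main}, a union bound over the $d!$ shortest Hamming paths from $x$ to $x^*$ (each of length at most $d^*$, so the step-bound applies throughout) gives
\[
\Pr\bigl[x^* \in \{x^{(t+1)},\dots,x^{(t+d)}\}\bigr] \ge d!\left(\frac{\lambda}{2en}\right)^d \ge \left(\frac{\lambda d}{2 e^2 n}\right)^d,
\]
using $d! \ge (d/e)^d$. Taking logarithms with $d \le d^* \le K n^\varepsilon/\lambda$ and using $\log n/\lambda \le \log(e/(e-1))/(1-\varepsilon)$ (from the assumption on $\lambda$), this is at least $\exp(-C_\varepsilon n^\varepsilon)$ for a suitable constant $C_\varepsilon$ depending only on $\varepsilon$.

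Assembling Steps~1 and~2 as in the proof of Theorem~\ref{thm:main}, each super-phase of length $O(n^2/\lambda + d^*) = \poly(n)$ finds $x^*$ with probability at least $\Omega(\exp(-C_\varepsilon n^\varepsilon))$; hence the expected number of super-phases is $\exp(O(n^\varepsilon))$ and the total expected runtime is $\exp(O(n^\varepsilon))$. The main obstacle will be making Step~1 fully rigorous: because the \oclea is non-elitist, a careful accounting is needed to show that the forward drift really dominates the backward fluctuations uniformly throughout $d \ge d^*$, and this is exactly where the threshold constant $\log(e/(e-1))$ enters the argument.
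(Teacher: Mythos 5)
Your proposal is correct and follows essentially the same route as the paper's proof: a drift-based warm-up (forward drift $\Omega(\lambda d/n)$ versus backward drift at most $(1-1/e)^\lambda\cdot O(1)\le e\,n^{-(1-\eps)}$, then Theorem~\ref{tdrift}) to reach distance $\Theta(n^\eps/\lambda)$, followed by a lucky phase of consecutive unit improvements with probability $d!\,(\Theta(\lambda/n))^d\ge\exp(-C'_\eps n^\eps)$, concatenated by the geometric-domination argument. The only cosmetic differences are that the paper bounds the lucky phase by simply multiplying the per-iteration probabilities $1-(1-\tfrac{d}{en})^\lambda\ge\tfrac{\lambda d}{(e+2e^2)n}$ of any fitness improvement (rather than your specific-neighbor event plus union bound over paths), and it cites Lemmas~7 and~8 of~\cite{RoweS14} for the $O(1)$ conditional backward loss and the $1-(1-r)^\lambda$ estimate that you assert.
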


\begin{proof}
  Our proof uses some of the arguments given in~\cite{RoweS14}, however, since we do not require a precise analysis of the easy part of the process, we simply resort to the additive drift theorem (Theorem~\ref{tdrift}) for that part. For any search point $x \in \{0,1\}^n$, denote by $d(x) := n - \onemax(x)$ its fitness distance (which also is its Hamming distance) to the optimum.
  
  For the easy part that the fitness distance is more than $d_0 := \frac{2e^2 n^\eps}{\lambda}$, we first show that starting from an arbitrary solution, with probability at least $\frac 12$ within $(1+o(1)) \frac 2e n^{2-\eps}$ iterations, a solution $x$ with $d(x) \le d_0$ is found. To ease the notation, we show our claim for the initial solution $x^{(0)}$. Since the sequence of search points $x^{(t)}$ generated by the \oclea is a Markov chain, the claim holds analogously for any starting iteration. 
  
  Hence let $x^{(0)}$ be arbitrary. For $t = 0, 1, 2, \dots$, let $X_t = \max\{0,d(x^{(t)}) - d_0\}$ the distance to our target of having a $d$-value of at most $d_0$. Let $t$ be such that $X_{t-1} > 0$. By Lemma~7 of~\cite{RoweS14}, we have $E[X_{t} - X_{t-1} \mid X_t > X_{t-1}] \le e$. The event $X_t > X_{t-1}$ itself means that all $\lambda$ offspring generated in iteration~$t$ have a fitness inferior to the one of the parent. For an offspring to have an inferior \onemax fitness, it is necessary that at least one of the $1$-bits of the parent was flipped. This happens with probability at most $1 - (1 - \frac 1n)^{n-d_0} \le 1-\frac 1e$, since $X_{t-1}>0$. Consequently, 
  \[\Pr[X_t > X_{t-1}] \le (1 - \tfrac 1e)^\lambda \le n^{-(1-\eps)}.\] 
  In summary, 
  \begin{align*}
  E[&{}\!\max\{0, X_{t} - X_{t-1}\}] \\
  &=  E[X_{t} - X_{t-1} \mid X_t > X_{t-1}] \cdot \Pr[X_t > X_{t-1}] \\
  &\le e n^{-(1-\eps)}.
  \end{align*}
  
  For the drift towards the target, we first observe that an offspring $y$ is better than the parent $x$ with probability at least 
  \[d(x) \tfrac 1n (1 - \tfrac 1n)^{n-1} \ge \frac{d(x)}{en}.\] 
  Consequently, the probability that at least one offspring is better, is at least $1 - (1 - \frac{d(x)}{en})^\lambda$. By the elementary, but very useful Lemma~8 of~\cite{RoweS14}, this is at least 
  \begin{align*}
  1 - \frac{1}{\frac{\lambda d(x)}{en} + 1} & \ge 1 - \frac{1}{\frac{\lambda d_0}{en} + 1} \\
  &= 1 - \frac{1}{2e n^{-(1-\eps)}+1} \\
  &= \frac{2e n^{-(1-\eps)}}{2e n^{-(1-\eps)}+1} \ge (1-o(1)) 2e n^{-(1-\eps)}.
  \end{align*}
  Recalling that a fitness improvement means that the process $(X_t)$ goes down by at least one, we have, $E[\max\{0, X_{t-1} - X_{t}\}] =  E[X_{t-1} - X_{t} \mid X_t > X_{t-1}] \cdot \Pr[X_t > X_{t-1}] \ge 1 \cdot (1-o(1)) 2e n^{-(1-\eps)}$.
  
  We have just computed that the drift of the process $(X_t)$ satisfies 
  \begin{align*}
  E[&X_{t-1} - X_t \mid X_{t-1} > 0] \\
  &\ge E[\max\{0, X_{t-1} - X_{t}\}] - E[\max\{0, X_{t} - X_{t-1}\}] \\
  &\ge (1-o(1)) 2e n^{-(1-\eps)} - e n^{-(1-\eps)} = (1-o(1))e n^{-(1-\eps)}.
  \end{align*}
   Consequently, by the additive drift theorem (Theorem~\ref{tdrift}), the first time $T$ to have $d(x^{(T)}) \le d_0$ satisfies 
  \[E[T] \le \frac{n}{(1-o(1))e n^{-(1-\eps)}} = (1+o(1)) \tfrac 1e n^{2-\eps}.\] 
  By Markov's inequality, with probability at least $\frac 12$, $2E[T] \le (1+o(1)) \frac 2e n^{2-\eps}$ iterations suffice to have a parent individual $x^{(t)}$ with $d(x^{(t)}) \le d_0$. 
  
  We now argue, reusing some arguments from the proof of Theorem~\ref{thm:main}, that now another $d_0$ iterations suffice to find the optimum with probability at least $\exp(-C_e n^\eps)$. In a similar manner as above, we see that if the current parent individual has a fitness distance of $d \le d_0$, then the probability to increase the fitness of the parent in one iteration $1 - (1 - \frac{d}{en})^\lambda \ge \frac{\lambda d}{en + \lambda d} \ge \frac{\lambda d}{(e+2e^2)n}$ by our assumption that $d \le d_0$ and the blunt estimate $\lambda d \le 2e^2 n$.

  Consequently, the probability that $d_0$ iterations suffice to find the optimum is at least 
  \begin{align*}
  \prod_{d=1}^{d_0} \frac{d \lambda}{(e+2e^2)n} 
  &= \left(\frac{\lambda}{(e+2e^2)n}\right)^{d_0} (d_0)! \ge \left(\frac{d_0 \lambda}{e(e+2e^2)n}\right)^{d_0} \\
  & = \exp\left(-d_0 \ln\left(\frac{e(e+2e^2)n}{d_0 \lambda}\right)\right)\\
  & = \exp\left(-\frac{2e^2 n^\eps}{\lambda} \ln\left(\frac{e(e+2e^2)n^{1-\eps}}{2 e^2}\right)\right) \\
  &\le \exp(-C'_\eps n^\eps)
  \end{align*}
  for a suitable constant $C'_\eps$; note that we used $\lambda = \Omega(\log n)$ in the last step.
  
  Now as in the proof of Theorem~\ref{thm:main}, the runtime is stochastically dominated by $(1+o(1)) \frac 2e n^{2-\eps} + d_0$ times a geometric random variable with success rate $p = \frac 12 \exp(-C'_\eps n^\eps)$, which gives the claimed expected runtime of at most $(1+o(1)) (\frac 2e n^{2-\eps} + d_0) \frac 1p = \exp(C_\eps n^\eps)$ for a suitable choice of $C_\eps$. 
\end{proof}

\section{An Exponential Upper Bound for the Mutation-based Simple Genetic Algorithm}\label{sec:simpleGA}

To indicate that our general analysis method is not restricted to ${(1+1)}$-type algorithms, we now prove an exponential upper bound for the mutation-based simple genetic algorithm (simple GA), which uses a non-trivial parent population and which is a generational GA, that is, individuals from the current population are never taken into the next population. We believe that, in principle, this algorithm could be covered via an extension of the framework regarded in Theorem~\ref{thm:main}, but for the sake of readability we prefer to give an independent analysis even if this requires repeating (in a compact manner) some arguments from the proof of Theorem~\ref{thm:main}.

The simple GA traditionally is used with crossover~\cite{Goldberg89}. The mutation-only version has been regarded in the runtime analysis community mostly because runtime analyses for crossover-based algorithms are extremely difficult. For example, the first runtime analysis of the crossover-based version is only from 2012~\cite{OlivetoW12gecco} and only shows results for relatively small population sizes below $n^{1/8}$ (the current best result~\cite{OlivetoW15} covers population sizes up to slightly below $n^{1/4}$), whereas for the mutation-based version  already in~2009 a strong result valid for all polynomial population size was shown~\cite{NeumannOW09}.

We start by making this algorithm precise and then discuss the known results. The mutation-only version of the simple GA with population size $\mu \in \N$ for the optimization of a non-negative fitness function $f : \{0,1\}^n \to \R_{\ge 0}$ is described in Algorithm~\ref{alg:simpleGA}. The algorithm starts with a population $P^{(0)}$ of $\mu$ random individuals from $\{0,1\}^n$ (our result will be valid for any initial population, but the usual initialization is random). We view a population $P$ as an array (tuple) $P = (P_1, \dots, P_\mu)$ of not necessarily distinct individuals. In each iteration $t = 1, 2, 3, \dots$, the simple GA computes from the previous population $P^{(t-1)}$ a new population $P^{(t)}$ by $\mu$ times independently selecting an individual from $P^{(t-1)}$ via fitness proportionate selection and mutating it via standard bit mutation with mutation rate $p = \frac 1n$. Here fitness proportionate selection from $P^{(t-1)}$ means that the $i$-th individual $P^{(t-1)}_i$ is chosen with probability $f(P^{(t-1)}_i) / \sum_{j=1}^\mu f(P^{(j)})$; except when all individuals have fitness zero, then fitness proportionate selection agrees with uniform selection, that is, the $i$-th individual is chosen with probability $\frac 1 \mu$.

\begin{algorithm2e}%
	Initialize $P^{(0)}$ with $\mu$ individuals chosen independently and uniformly at random from $\{0,1\}^n$\;
	\For{$t = 1, 2, \ldots$}{
    \For{$i \in [1..\mu]$}{
      Select $x \in P^{(t-1)}$ via fitness proportionate selection\;
      Generate $P^{(t)}_i$ from $x$ by flipping each bit independently with probability $p = \frac 1n$\;
      }
  }
\caption{The simple genetic algorithm (simple GA) with population size $\mu$ and mutation rate $p = \frac 1n$ to maximize a function $f : \{0,1\}^n \to \R_{\ge 0}$.}
\label{alg:simpleGA}
\end{algorithm2e}

The existing runtime results for the simple GA are not very encouraging for this algorithm. Apart from results for very small mutation rates $p = O(n^{-2})$, see~\cite[Theorem~16]{DangL16algo} and~\cite[Theorem~22]{DoerrK19arxiv}, or results for exponentially transformed fitness functions~\cite[Theorem~13 and 14]{NeumannOW09}, they show that the selection pressure resulting from fitness proportionate selection is not strong enough to give a sufficient progress to the optimum. The precise known results for the performance of Algorithm~\ref{alg:simpleGA} on the \onemax benchmark are the following. \cite[Theorem~8]{NeumannOW09} showed that the simple GA with $\mu \le \poly(n)$ needs with high probability more than $2^{n^{1-O(1/\log\log n)}}$ iterations to find the optimum of the \onemax function or any search point in Hamming distance at most $0.003n$ from it. This is only a sub\-exponential lower bound. In~\cite[Corollary~13]{Lehre11}, building on the lower bound method from~\cite{Lehre10}, a truly exponential lower bound is shown and this for weaker the task of finding a search point in Hamming distance at most $0.029n$ from the optimum, but only for a relatively large population size of $\mu \ge n^3$ (and again $\mu \le \poly(n)$). The conditions $\mu \ge n^3$ and $\mu \le \poly(n)$ were removed in~\cite{Doerr20ppsnLB}. Lehre~\cite{Lehre11} also showed an exponential lower bound for the time to reach the optimum for a (mildly) scaled version of fitness proportionate selection and for general $\Theta(1/n)$ mutation rates. For the crossover-based version of the simple GA the current best results is that with population size $\mu \le n^{(1/4) - \eps}$ it takes at least time $2^{n^{\eps/11}}$ to find a solution that is a small constant factor better than $\frac n2$, which is the expected fitness of a random solution.

For the mutation-based version of the simple GA (Algorithm~\ref{alg:simpleGA}) we now show that the known exponential lower bounds are tight, that is, regardless of the population size (as long as it is not super-exponential) the simple GA finds the optimum of the \onemax benchmark in expected time $\exp(O(n))$.

\begin{theorem}\label{thm:simpleGA}
  Consider a run of the simple GA with arbitrary population size $\mu$ and with an arbitrary initial population $P^{(0)}$ on the \onemax benchmark. Let $T$ be the first iteration in which the optimum of \onemax is generated. Then $T \preceq O(n) \Geom(-\exp(O(n))$ and $E[T] = \exp(O(n))$. In particular, if $\mu = \exp(O(n))$, then the optimum is sampled within an exponential number of fitness evaluations. 
\end{theorem}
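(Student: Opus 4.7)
The plan is to adapt the ``lucky path'' strategy of Theorem~\ref{thm:main} to the generational, population-based setting of Algorithm~\ref{alg:simpleGA}. The structural feature I shall exploit is that, conditional on the current population $P^{(t-1)}$, the $\mu$ individuals of $P^{(t)}$ are generated \emph{independently}: each slot selects a parent from $P^{(t-1)}$ via fitness-proportionate selection and then mutates it via standard bit mutation. This independence lets the $1/\mu$-loss in the per-slot selection probability (caused by $\sum_j f(P^{(t-1)}_j)$ being as large as $\mu n$) be compensated exactly by having $\mu$ independent offspring per generation, so the resulting per-iteration bound will come out free of $\mu$.

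Fix an arbitrary starting individual $x^{(0)} \in P^{(0)}$ with $f(x^{(0)}) \ge 1$ (the all-zero case is handled at the end) and build a lucky lineage $x^{(0)}, x^{(1)}, \dots, x^{(d_0)} = (1, \ldots, 1)$ with $d_0 = H(x^{(0)}, (1,\ldots,1))$. The event $\calE_t$ I track in generation~$t$ is that at least one slot of $P^{(t)}$ is produced by selecting $x^{(t-1)}$ as parent and mutating exactly one 0-bit of it to a 1-bit; on $\calE_t$, I define $x^{(t)}$ to be any such slot, so $f(x^{(t)}) = f(x^{(t-1)})+1$. Writing $f_{t-1} = f(x^{(t-1)})$, $d_{t-1} = n - f_{t-1}$, and $S^{(t-1)} = \sum_j f(P^{(t-1)}_j) \le \mu n$, a single slot realizes $\calE_t$ with probability at least
\[
p_t \;\ge\; \frac{f_{t-1}}{S^{(t-1)}}\cdot\frac{d_{t-1}}{n}\Bigl(1-\tfrac1n\Bigr)^{n-1} \;\ge\; \frac{f_{t-1}\,d_{t-1}}{e\,\mu\,n^2}.
\]
Combined with $1-(1-p)^\mu \ge \min(\mu p/2, 1/2)$ and $f_{t-1}\,d_{t-1} \le n^2/4$, this yields $\Pr[\calE_t \mid x^{(t-1)} \in P^{(t-1)}, \text{history}] \ge f_{t-1}\,d_{t-1}/(2en^2)$, uniformly in $\mu$ and in the rest of the population.

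Chaining these conditional bounds along the lineage and telescoping the product gives
\[
\Pr\!\left[\,\bigcap_{t=1}^{d_0}\calE_t\right] \;\ge\; \prod_{t=1}^{d_0}\frac{f_{t-1}\,d_{t-1}}{2en^2} \;=\; \frac{(n-1)!\,d_0!}{(f_0-1)!\,(2en^2)^{d_0}},
\]
and on the intersection $(1,\ldots,1) \in P^{(d_0)}$. The heart of the calculation is to show the right-hand side is $\exp(-\Theta(n))$ uniformly in $f_0 \in [1..n-1]$. Taking logarithms, substituting $\alpha = f_0/n$, and applying Stirling, the $\log n$ contributions from the three factorials and from the $(2en^2)^{d_0}$ denominator cancel because their coefficients sum to $n + d_0 - f_0 - 2d_0 = n - f_0 - d_0 = 0$; what remains is linear in $n$ with coefficient $h(\alpha) - (1-\alpha)(3+\ln 2)$, where $h(\alpha) := (1-\alpha)\ln(1-\alpha) - \alpha\ln\alpha$ is bounded on $[0,1]$. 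This residual is $-\Theta(n)$, so the lineage reaches the optimum within $d_0 \le n$ generations with probability at least $\exp(-O(n))$.

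Finally, if $P^{(0)} = \{0^n,\ldots,0^n\}$, fitness-proportionate selection degenerates to uniform selection, and a single iteration produces at least one non-zero offspring with probability $1-((1-1/n)^n)^\mu \ge 1 - e^{-\mu} \ge 1/2$; after this ``prep'' step the main argument applies from the resulting population. Partitioning time into chunks of length $n+1$, each chunk succeeds (given its starting population) with probability at least $\exp(-O(n))$, yielding $T \preceq (n+1)\,\Geom\!\bigl(\exp(-O(n))\bigr)$ and hence $E[T] = \exp(O(n))$; multiplying by $\mu$ then gives the $\exp(O(n))$ bound on fitness evaluations when $\mu = \exp(O(n))$. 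The main technical obstacle will be the Stirling bookkeeping for the product: any uncancelled $\log n$ factor in the exponent would worsen the bound from $\exp(O(n))$ to the trivial $n^{O(n)}$, so one must verify the cancellation at the $\log n$ level carefully and check the edge behaviour at $f_0 = 1$ and $f_0$ close to $n$.
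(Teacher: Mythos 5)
Your proof is correct, but it takes a genuinely different route from the paper's. The paper splits the argument into two phases: it first uses the additive drift theorem on the population's average fitness to show that within expected $3n$ iterations some individual reaches fitness at least $\frac n3$; from then on that individual (and each successor on the lucky path) is selected at least once per generation with \emph{constant} probability $1-(1-\frac{1}{3\mu})^\mu \ge 1-e^{-1/3}$, so the path argument of Theorem~\ref{thm:main} applies verbatim with a constant per-step rate $\frac cn$. You skip the drift phase entirely and start the lineage from any individual of fitness $f_0\ge 1$, accepting a per-generation success probability of only about $\frac{f_{t-1}d_{t-1}}{2en^2}$; the price is an extra factor $\prod_t \frac{f_{t-1}}{n} = \frac{(n-1)!}{(f_0-1)!\,n^{d_0}}$, which you must show is itself only $\exp(-O(n))$ rather than $n^{-\Theta(n)}$. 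Your Stirling bookkeeping is right: the $\ln n$ coefficients are $n + d_0 - f_0 - 2d_0 = 0$, and the residual $h(\alpha)-(1-\alpha)(3+\ln 2)$ is bounded, so the bound is uniform in $f_0$ (the edge cases $f_0=1$ and $d_0=O(1)$ are easily checked directly). What each approach buys: the paper's two-phase version keeps the exponential rate computation identical to Theorem~\ref{thm:main} and isolates the population issue into a clean, reusable drift lemma, at the cost of invoking Theorem~\ref{tdrift} and a stochastic-domination fact about fitness-proportionate selection; yours is more self-contained and handles the selection-probability degradation head-on, at the cost of the more delicate factorial cancellation you correctly flag as the technical crux. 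One small point to make explicit in a write-up: define $x^{(t)}$ as, say, the lowest-index slot realizing $\calE_t$ so that the conditioning in the chained product is on a well-defined filtration; since the $\mu$ slots of $P^{(t)}$ are independent given $P^{(t-1)}$ and your bound uses only $f(x^{(t-1)})$ and $S^{(t-1)}\le\mu n$, the chaining is then valid regardless of the rest of the population.
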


\begin{proof}
  We follow the rough outline of the proof of Theorem~\ref{thm:main}. As there, it suffices to show that for any state of the algorithm the probability that the optimum is found in the next $O(n)$ iterations is at least $\exp(- O(n))$. Then a simple restart argument as in the proof of Theorem~\ref{thm:main} shows that $T \preceq O(n) \Geom(-\exp(O(n))$ and thus $E[T] = \exp(O(n))$.
  
  \textbf{Obtaining a fitness of at least $\frac n3$:}
  To have a better control over the individuals selected via fitness proportionate selection, it helps to have at least one individual with fitness at least $\frac n3$ in the population. To this aim, we now show that when starting the simple GA with an arbitrary population, it takes an expected time of at most $3n$ iterations to have at least one individual in the population that has fitness at least $\frac n3$.
  
  Let $P^{(0)}$ be an arbitrary initial population and let $P^{(t)}$, $t = 0, 1, 2, \dots$ be the populations generated in a run of the simple GA started with $P^{(0)}$. We regard the following random process $X_t$, $t = 0, 1, 2, \dots$. If $P^{(t)}$ contains at least one individual with fitness $\frac n3$ or more, or if $t \ge 1$ and $X_{t-1} = n$, then let $X_t := n$. Otherwise, let $X_t$ be the average fitness of the individuals in $P^{(t)}$. This defines a random process in a finite subset of $[0,n]$. We estimate $T_0 := \min\{t \mid X_t = n\}$. Assume that for some $t$ we have $X_t < n$. We compute the expected fitness of an individual in $P^{(t+1)}$. Such an individual $y$ is generated from a parent $x$ chosen via fitness proportionate selection from $P^{(t)}$ by flipping each bit independently with probability $p = \frac 1n$. We compute $E[\OM(y) \mid x] = \OM(x) + (n-\OM(x)) \frac 1n - \OM(x) \frac 1n \ge \OM(x) + \frac 13$, using that the $\OM(x) \le \frac n3$. Since fitness-proportionate selection favors better individuals, $\OM(x)$ stochastically dominates the fitness of a random individual $X$ from $P^{(t)}$, see~\cite[Lemma~12]{Doerr20ppsnLBarxiv}. Consequently, $E[\OM(y) - \OM(X)] \ge E[\OM(y) - \OM(x)] \ge \frac 13$. Note that $E[\OM(y)]$ is also the expected average fitness of $P^{(t+1)}$. Since $X_{t+1}$ is this average fitness (which is always at most $n$) or $n$ (in the case that $P^{(t+1)}$ contains an individual with fitness at least $\frac n3$), we have $E[X_{t+1} - X_t] \ge E[\OM(y) - \OM(x)] \ge \frac 13$. Our computation just made was under the assumption that $X_t < n$. Hence we have $E[X_{t+1} - X_t \mid t < T_0] \ge \frac 13$. By the additive drift theorem (Theorem~\ref{tdrift}), we have $E[T_0] \le 3n$. Via a simple Markov bound, we see that with probability at least $\frac 12$, there is a $t \in [0..6n]$ such that $P^{(t)}$ contains an individual with fitness at least $\frac n3$. 
  
  \textbf{From an individual with fitness at least $\frac n3$ to the optimal solution:}
  For any population of the simple GA, an individual $x$ with fitness $\OM(x) \ge \frac n3$ in each selection of a particular parent has a probability of  
  \[q = \frac{\OM(x)}{\sum_{y \in P} f(x)} \ge \frac{n/3}{\mu n} = \frac{1}{3\mu}\]
  of being selected. Hence with probability $1 - (1-q)^\mu \ge 1 - (1-\frac{1}{3\mu})^\mu \ge 1 - \exp(-\frac 13)$, it is selected at least once as parent (here we used the basic estimate $1+r \le e^r$ valid for all $r \in \R$). For the first offspring generated from $x$ in this iteration (and for any other, but we want to regard a particular one), the probability that a particular Hamming neighbor of $x$ results from mutating $x$ is at least $\frac 1n (1-\frac 1n)^{n-1} \ge \frac{1}{en}$. Via an elementary induction we see that if at some time $t$, say for simplicity $t = 0$, we have an individual $x^{(0)}$ in the population with $d := n - \OM(x^{(0)}) \le \frac 23 n$, then the probability that at each time $t \in [1..d]$ there is an $x^{(t)} \in P^{(t)}$ such that $\OM(x^{(t)}) = n - (d-t)$ and $x^{(t)}$ has been generated from $x^{(t-1)}$ by flipping a single bit, is at least $(\frac 1e \frac{1}{e} (1 - \exp(-\frac 13)))^n$, where this number is computed as in~\eqref{eq:main}.
  
  This shows that for any time $t$ of the run of the simple GA, regardless of the past, the probability that the optimum of \onemax is sampled in iterations $t, \dots, t+7n-1$ is at least $\exp(O(n))$. This completes the proof of Theorem~\ref{thm:simpleGA}. 
\end{proof}

\section{Conclusion and Outlook}

In this work, we argued for proving exponential runtime guarantees for evolutionary algorithms. Exponential-time algorithms have been a modern subfield of classic algorithms for around twenty years now for various reasons. Exponential upper bounds for evolutionary algorithms can not only complement the many exponential lower bounds existing in this field (and by this indicate that the problem is well-understood), but they can also rule out that the algorithm does not suffer from an even worse runtime behavior such as the not uncommon $n^{\Theta(n)}$ runtime. 

With Theorem~\ref{thm:main} we provided a simple and general approach towards proving exponential upper bounds. With this method, we easily proved exponential upper bounds for various algorithmic settings.

In this first work on exponential-time evolutionary algorithms, we have surely not developed the full potential of this perspective in evolutionary computation. The clearly most important question for future work is what can be said about the constant $C$ in the $\poly(n) C^n$ runtime guarantee. A constant $C$ less than $2$ shows that the algorithm is superior to random or exhaustive search. Taking again the field of classic algorithms as example, another interesting question is if there are EAs with ``nice'' exponential runtimes such as, e.g., the $1.0836^n$ runtime of the algorithm of Xiao and Nagamochi~\cite{XiaoN13} for finding maximum independent sets in graphs with maximum degree~3. 

We mention two directions in which extensions of our method would be desirable. In Section~\ref{sec:simpleGA}, we have analyzed the mutation-based simple GA, but as many previous works, we have shied away from the original simple GA using crossover. Our proof, regarding one lucky lineage, does not apply to settings with crossover. Hence nothing better than the trivial $n^n$ upper bound is known, which is quite far from the $2^{n^{\eps/11}}$ lower bound shown in~\cite{OlivetoW15} for $\mu \le n^{(1/4)-\eps}$. 

In Section~\ref{sec:appli}, we have applied our general method, formulated for single-trajectory algorithms, mostly to ${(1+1)}$-type algorithms. The reason is that when regarding more offspring, say in a run of the \oplea for $\lambda > 1$, then each of these could potentially interfere with the survival of the desired offspring. Our methods would remain applicable when also excluding such interferences in the lucky event we are waiting for, but this would lead to upper bounds of order $\exp(O(n\lambda))$. We do not believe that larger offspring populations are that detrimental and hope that stronger methods can prove upper bounds which are exponential only in the problem size~$n$.

\newcommand{\etalchar}[1]{$^{#1}$}


}
\end{document}